
\typeout{IJCAI--25 Instructions for Authors}


\documentclass{article}
\pdfpagewidth=8.5in
\pdfpageheight=11in

\usepackage{ijcai25}

\usepackage{times}
\usepackage{soul}
\usepackage{url}
\usepackage[hidelinks]{hyperref}
\usepackage[utf8]{inputenc}
\usepackage[small]{caption}
\usepackage{graphicx}
\usepackage{amsmath}
\usepackage{amsthm}
\usepackage{booktabs}
\usepackage{algorithm}
\usepackage{algorithmic}
\usepackage[switch]{lineno}
\usepackage{relsize}

\urlstyle{same}



\newtheorem{theorem}{Theorem}





\pdfinfo{
/TemplateVersion (IJCAI.2025.0)
}

\title{Imitation Learning via Focused Satisficing}




\author{
Rushit N. Shah$^{1,}$\thanks{Equal contribution}
\and
Nikolaos Agadakos$^{1,*}$\and
Synthia Sasulski$^1$\and
Ali Farajzadeh$^1$ \and \\
Sanjiban Choudhury$^2$\And
Brian Ziebart$^1$\\
\affiliations
$^1$Department of Computer Science, University of Illinois Chicago\\
$^2$Department of Computer Science, Cornell University\\
\emails
\{rshah231, nagada2, afaraj5, bziebart\}@uic.edu,
synthiasasulski@gmail.com,
sanjibanc@cornell.edu
}

\usepackage{amssymb}
\usepackage{mathtools}
\usepackage{comment}
\usepackage{multirow}
\usepackage{enumitem}

\usepackage[capitalize,noabbrev]{cleveref}

\theoremstyle{plain}

\newtheorem{lemma}[theorem]{Lemma}
\newtheorem{corollary}[theorem]{Corollary}
\theoremstyle{definition}
\newtheorem{definition}[theorem]{Definition}

\theoremstyle{remark}

\usepackage[textsize=tiny]{todonotes}

\def\trajF{\ifmmode\phi\else$\phi$\fi} 
\def\costF{\ifmmode\text{f}\else f \fi}

\newcommand\blfootnote[1]{%
  \begingroup
  \renewcommand\thefootnote{}\footnote{#1}%
  \addtocounter{footnote}{-1}%
  \endgroup
}

\definecolor{armygreen}{rgb}{0.09,0.45,0.27}
\newcommand\tbest[1]{\textcolor{armygreen}{\textbf{#1}}}

\crefname{section}{§}{§§}

\DeclareMathOperator*{\argmin}{arg\,min}

\begin{document}

\maketitle

\begin{abstract}
Imitation learning often assumes that demonstrations are close to optimal according to some fixed, but unknown, cost function.
However, according to \emph{satisficing theory}, humans often choose \emph{acceptable} behavior based on their personal (and potentially dynamic) levels of \emph{aspiration}, rather than achieving (near-) optimality. For example, a {\tt lunar lander} demonstration that successfully lands without crashing might be acceptable to a novice despite being slow or jerky.
Using a margin-based objective to guide deep reinforcement learning, our {\bf focused satisficing} approach to imitation learning seeks a policy that surpasses the demonstrator's aspiration levels---defined over trajectories or portions of trajectories---on unseen demonstrations \emph{without explicitly learning those aspirations}. We show experimentally that this focuses the policy to imitate the highest quality (portions of) demonstrations better than existing imitation learning methods, 
providing much higher rates of guaranteed acceptability to the demonstrator, and competitive true returns on a range of environments.\blfootnote{Accepted for publication at the 34th International Joint Conference on Artificial Intelligence (IJCAI 2025).}
\end{abstract}

\section{Introduction}
Hand-engineered policies and reinforcement-learned policies from hand-specified cost functions often fail to perform adequately in complicated tasks of interest (e.g., self-driving).
Prevalent imitation learning approaches \cite{osa2018algorithmic} address this issue either by directly mimicking human demonstrations via behavioral cloning \cite{pomerleau1991efficient} or by estimating reward functions that rationalize demonstrator behavior \cite{ng2000algorithms,abbeel2004}
---both under the assumption that the demonstrator is (near) optimal.
The many advantages autonomous systems have over human actors, including faster reaction time \cite{whelan2008effective}, more precise control \cite{ladha2023advantages}, increased rationality, and lossless memory \cite{miller1956magical}, can violate this assumption and lead to potential value misalignment \cite{amodei2016concrete} between demonstrator and imitator.
New perspectives are needed to train more capable imitation learners from less capable demonstrators without supplemental annotations \cite{christiano2017deep,brown2019extrapolating,rafailov2024direct} or assuming some expert-level demonstrations being available \cite{tangkaratt2021robust}.

\begin{figure}
\begin{center}
\includegraphics[width=4.2cm]{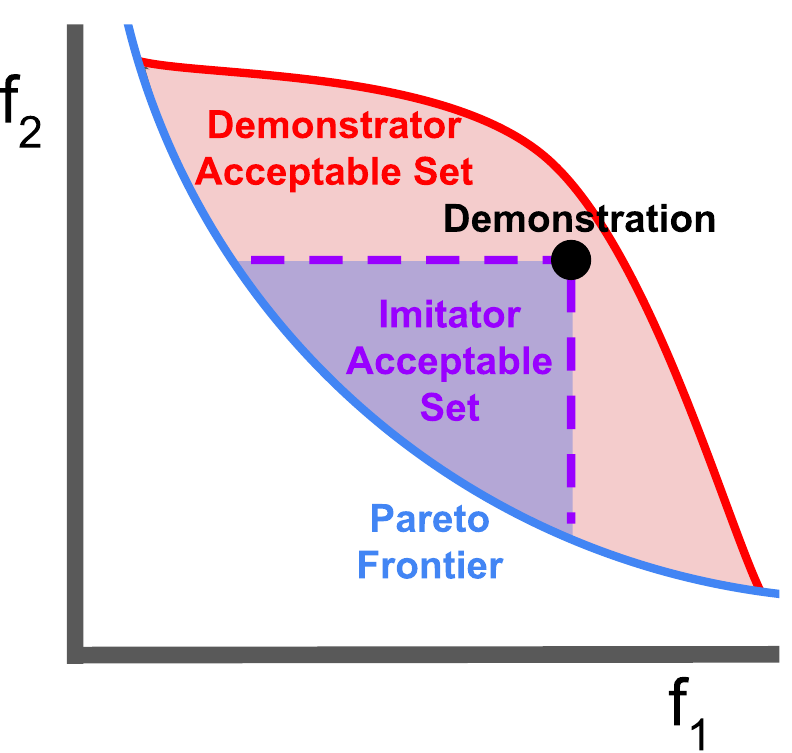}
\includegraphics[width=4.2cm,trim=0 5 30 65, clip]{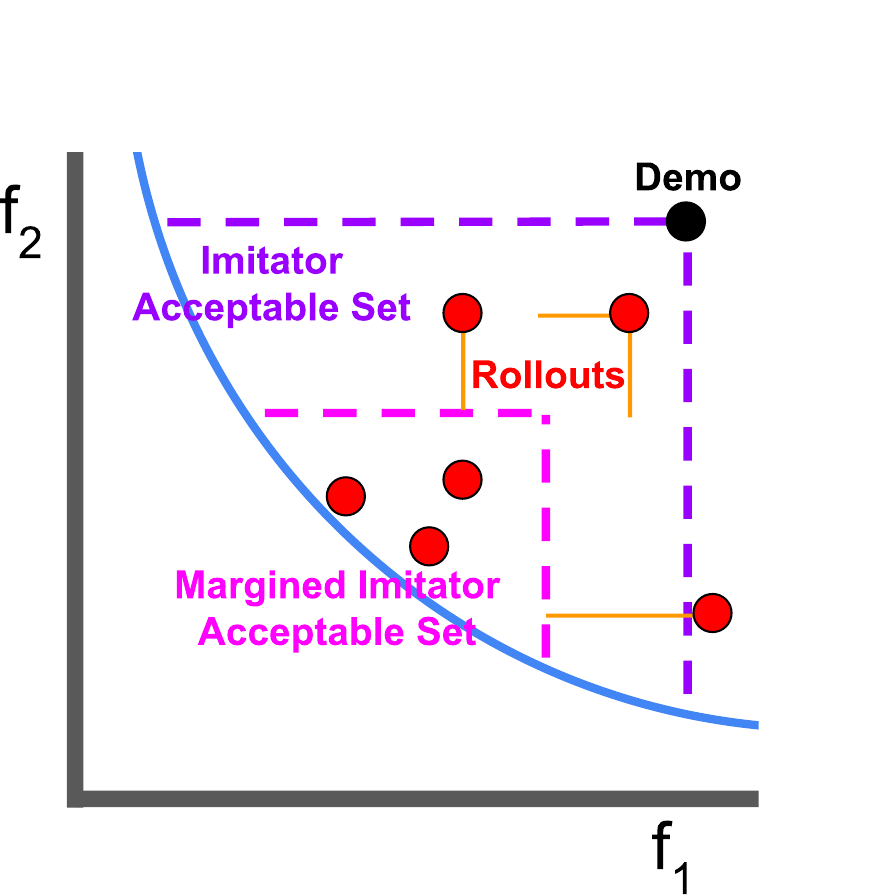}

\end{center}
\caption{Left: Pareto-dominating in the cost function bases (${\textrm f}_1$, ${\textrm f}_2$) of acceptable behavior (purple: \emph{imitator acceptable set}) guarantees the imitator is acceptable to the demonstrator (red: \emph{demonstrator acceptable set}).
Right: The subdominance (orange lines) measures how far imitator trajectory rollouts are from guaranteed acceptance (by a margin).}
\label{fig:satisficing}
\end{figure}

\begin{figure*}[t]
\centering
    \begin{minipage}{0.5\textwidth}
    \includegraphics[width=1.0\columnwidth]{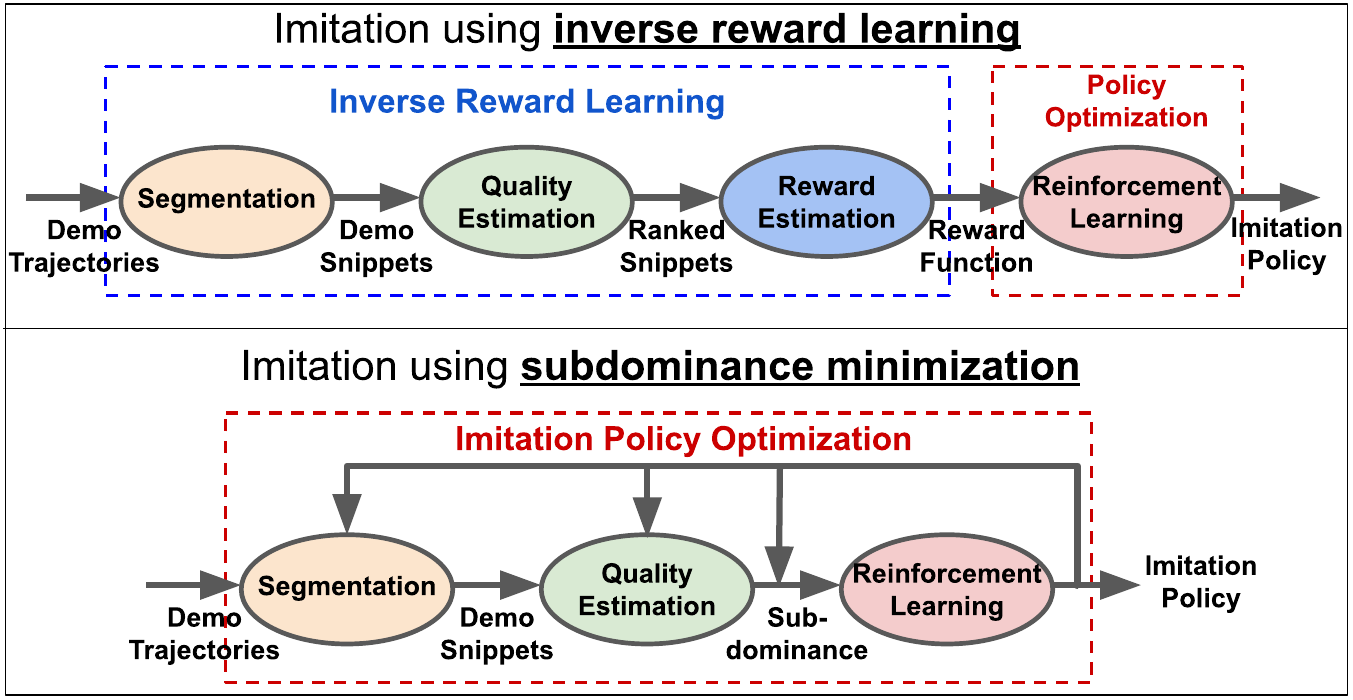}
    \end{minipage}
    \hspace{1mm}
    \begin{minipage}{0.48\textwidth}
    \caption{Existing reward-based
    imitation methods, e.g., TREX \protect\cite{brown2019extrapolating}, seek to outperform demonstrations using a pipeline of engineered components (top) to first segment trajectories into ``snippets,'' and to ultimately estimate a reward function that is then optimized using reinforcement learning. Our approach (bottom) uses the \textbf{subdominance} as the reinforcement learning objective, which is defined by the relative performance of the imitator compared to the demonstrations in each cost feature. This effectively uses feedback from the learned imitator policy to guide additional reinforcement learning without an explicit reward function.}
    \label{fig:model}
    \end{minipage}
\end{figure*}

When faced with challenging decision tasks, \emph{satisficing theory} \cite{simon1956rational} suggests that demonstrators produce behavior that is \emph{acceptable} rather than (near) optimal.
By viewing imitation learning through this lens, we aim for imitator behavior that is similarly \emph{acceptable} to the demonstrator, despite never knowing the demonstrator's precise acceptability criteria (Figure \ref{fig:satisficing}, left)---working instead with an assumed class of cost functions that defines it. 
To pursue this aim, we develop \underline{\textbf{Min}}imally \underline{\textbf{Sub}}dominant \underline{\textbf{F}}ocused \underline{\textbf{I}}mitation (\textbf{MinSubFI}),
which employs the subdominance \cite{ziebart2022towards}, a margin-based measure of \emph{insufficiency} (i.e., the distance from guaranteeing imitator-acceptability by a margin), as a training objective for policy gradient optimization
(Figure \ref{fig:satisficing}, right).
This produces policies that are maximally acceptable rather than reward-maximizing.
Compared to existing inverse reward learning methods \cite{brown2019extrapolating,burchfiel2016distance,wirth2017survey,wu2019imitation,chen2020learning,zhang2021confidence}, which are highly reliant on 
an estimated scalar reward function to guide reinforcement learning (e.g., using the pipeline of engineered components in Figure \ref{fig:model}, top), our approach more directly optimizes the imitator's policy,  enabling it to:

\begin{itemize}[noitemsep,topsep=0pt]
\item Learn context-sensitive policies \underline{without} learning context-sensitive cost functions;
\item Ignore less optimal demonstrations \underline{without} requiring explicit noise modeling;
\item Automatically select and learn from portions of trajectories (i.e., snippets) of high quality; and
\item Provide generalization guarantees for changing acceptability (e.g., due to skill improvement or fatigue).
\end{itemize}

Under the MinSubFI objective, many of the same engineered components of existing approaches (Figure \ref{fig:model}, top) are jointly optimized in a unified manner (Figure \ref{fig:model}, bottom).
We evaluate the benefits of MinSubFI on imitation learning tasks using human and synthetic demonstrations with both engineered and learned cost features.

\section{Satisficing Demonstrations \& Policy Gradient Subdominance Minimization}
\label{seq: minsub_policy_grad}

We now formally recast imitation learning through the lens of \emph{satisficing theory}.
Under this perspective, policies are learned from demonstrations that are \emph{acceptable}, according to an unknown acceptability set, rather than \emph{near-optimal}.
We broadly define this notion of acceptability over trajectories and trajectory ``snippets" (i.e., portions of trajectories), and develop new imitation learning methods that are designed to be performant with respect to the demonstrators' unknown acceptability sets in both theory and practice.

\subsection{Imitation Learning Problem Setting}

We consider the 
imitation learning \cite{osa2018algorithmic} task of producing a policy $\hat{\pi}$ based on demonstrated trajectories of states and actions, $\tilde{\xi} = (\tilde{s}_1, \tilde{a}_1, \tilde{s_2}, \ldots, \tilde{s}_T)$.
Demonstrations are produced from a task-indexed Markov decision process (MDP), $\mathcal{M}=(\mathcal{S},\mathcal{A},\{\tau_i\},C)$, characterized by states $\mathcal{S}$,
actions $\mathcal{A}$, state transition probability distributions $\tau_i : \mathcal{S} \times \mathcal{A} \rightarrow \Delta_{\mathcal{S}}$ (with $\Delta$ representing a probability simplex), and a cost function $C : \mathcal{S} \rightarrow \mathbb{R}_{\geq 0}$. 
The state transition probability distribution is defined for $s=a=\varnothing$ to provide an initial state distribution.
Each state transition probability distribution, $\tau_i$, corresponds to a different task $i$ that shares the same state-action space, but may have different initial states, different absorbing (goal) states, or different dynamics more generally.
We use $\tilde{\xi}_{i,j}$ to denote the j$^{\text{th}}$ demonstration for the i$^{\text{th}}$ task, $\tilde{\Xi}$ to denote the set of all demonstrations, and $\tilde{\Xi}_i$ to denote the set of demonstrations corresponding to task $i$.
The cost/reward function is unavailable to the imitator (providing at most $\mathcal{M}\backslash C$), distinguishing imitation learning from (offline) reinforcement learning \cite{levine2020offline}.

\subsection{Satisficing Perspective of Demonstrations}

According to satisficing theory \cite{simon1956rational}, when faced with challenging decision tasks, humans tend to prioritize behaviors that are acceptable to them rather than striving for optimality.
This implies that demonstrated behavior is selected to be 
\emph{acceptable}, according to some aspirational criteria of the demonstrator, rather than being (near) \emph{optimal}.

\begin{definition} 
Trajectory $\xi$ {\bf satisfices} (or is {\bf acceptable}) for a particular {\bf aspiration}, defined by $({\bf w}, \nu, t, t')$ if and only if it is less costly than the aspirational threshold $\nu$ evaluated using the cost function parameterized by ${\bf w}$: $\text{cost}_{\bf w}(\xi_{t:t'}) < \nu$. It {\bf satisfices} the {\bf aspiration/acceptability set} 
$\Omega = \{({\bf w}, \nu, t, t')\}$, i.e., $\xi \in \text{Satisf}_{\Omega}$,
if and only if $\xi$ satisfices each aspiration in 
$\Omega$. 
\end{definition}

Note that the aspiration set can be context-dependent and vary for each demonstration. 
For example, it may change with the growing experience (or fatigue) of the demonstrator, or based on available side information (e.g., the weather conditions when controlling a vehicle).
Additionally, each aspiration criteria can be defined over a portion (i.e., a ``snippet") $\xi_{t:t'}$ of the full trajectory $\xi_{1:T}$.

Aspiration sets---and their relationships to available contextual information---are generally unknown. Our aim is not to learn them explicitly.  
Instead, we seek a policy that produces trajectories $\xi \sim \pi \times \tau$, with {\bf maximal probability of acceptance}, $P(\xi \in \text{Satisf}_{\tilde{\xi}})$, for $\tilde{\xi}$'s implicit satisfaction set.

A {\bf key question} from this satisficing perspective is: \emph{do existing imitation learners provide acceptability guarantees with respect to (unknown) demonstrator acceptability sets?}

{\bf Behavioral cloning} approaches \cite{pomerleau1991efficient} directly estimate a (stochastic) policy $\pi_\theta : \mathcal{S} \rightarrow \Delta_{\mathcal{A}}$ from demonstrated state-action pairs, $(s_t,a_t)$.
The simplicity of this approach allows the full range of supervised machine learning techniques to be employed to estimate the policy.
For example, generative adversarial imitation learning (GAIL) \cite{gail} employs a discriminator to distinguish between human and automated action choices, and guide policy learning to minimize any differences.
Unfortunately,  behavioral cloning methods cannot outperform the demonstration policy beyond being Bayes optimal for a predictive loss that may not align with the acceptability set cost function(s).
This prevents behavioral cloning methods from providing satisficing guarantees. 

{\bf Inverse reinforcement learning} \cite{kalman1964linear} estimates the cost function $C(s)$ that explains or rationalizes demonstrations (making them near optimal).
A cost function linear in a set of state features, ${\bf f} : \mathcal{S} \rightarrow \mathbb{R}^K$, or state-action features, ${\bf f} : \mathcal{S} \times \mathcal{A} \rightarrow \mathbb{R}^K$ is commonly assumed \cite{ng2000algorithms}.
Under this assumption, {\bf feature matching} \cite{abbeel2004} guarantees the estimated policy $\hat{\pi}$ has expected cost under the demonstrator's unknown fixed cost function weights $\tilde{w} \in \mathbb{R}^K$ equal to the average of the demonstration policies $\pi$ if the expected feature counts match:
\begin{align}
\label{eq:fconstraint}
    &\mathbb{E}_{\substack{\!\!\tau_i \sim \tilde{\Xi},\\ \xi \sim \pi \times \tau_i}} [f_k(\xi)] =\frac{1}{|\tilde{\Xi}|} \sum_{\tilde{\xi}_{i,j} \in \tilde{\Xi}} f_k(\tilde{\xi}_{i,j}),\, \forall k
    \\    &
    \implies \mathbb{E}_{\substack{\!\!\tau_i \sim \tilde{\Xi},\\ \xi \sim \pi_\theta \times\tau_i}} [C_{\hat{w}}(\xi)] = \frac{1}{|\tilde{\Xi}|} \sum_{\tilde{\xi}_{i,j} \in \tilde{\Xi}} C_{\tilde{w}}(\tilde{\xi}_{i,j}), \nonumber 
\end{align}
where $f_k(\xi)\!\triangleq\!\sum_{s_t,a_t \in \xi} f_k(s_t,a_t)$ and
$C_{\hat{w}}(\xi)\!\triangleq\!\sum_{s_t,a_t \in \xi} C_{\hat{w}}(s_t,a_t)$. 
This feature-matching constraint \eqref{eq:fconstraint} can be enforced using a potential term measuring the demonstration $\tilde{\xi}$'s suboptimality relative to induced behavior $\xi$.
Closer to our approach, game-theoretic apprenticeship learning \cite{syed2007game} assumes the sign of the linear cost function's weights are known and produces a policy that is guaranteed to be better in expectation than the demonstration average under worst-case weights. 

Unfortunately, matching the demonstrator's unknown expected rewards (or outperforming on average) only guarantees that the imitator achieves the aspiration level in expectation.
If the demonstrators' aspirations depend on context that is not incorporated in the learned cost function, better levels of aspiration will not be guaranteed.
Thus, inverse reinforcement learning does not provide useful guarantees for per-demonstration satisficing; it is not a discriminative enough policy optimization method.

\subsection{Subdominance Minimization and Satisficing}

The subdominance measures how far trajectory $\xi$ is from Pareto-dominating (i.e., smaller in each cost feature dimension than) a demonstrated trajectory $\tilde{\xi}$ by a margin (Figure \ref{fig:satisficing}, right).
It has been previously employed for inverse optimal control to make the optimal trajectory induced by learned linear cost function weights $\boldsymbol{w}\in\mathbb{R}^{K}_{\geq0}$,
outperform sets of task-specific demonstrations $\{\tilde{\Xi}_i\}$ \cite{ziebart2022towards}:
{\small
\begin{align}
& \min_{\boldsymbol{w}\geq0} \min_{\boldsymbol{\alpha} \geq 0}  \sum_{i=1}^{N} \frac{|\tilde{\Xi}_i|}{|\tilde{\Xi}|}\text{subdom}_{\boldsymbol{\alpha}} (\xi^{\ast}_i({\bf w}), \tilde{\Xi}_i) + \frac{\lambda}{2}||\boldsymbol{\alpha}||, \notag \text{ where:}\\
&  
\text{subdom}_{\boldsymbol{\alpha}}(\xi,\tilde{\Xi}) \!=\! \frac{1}{|\tilde{\Xi}|}\sum_{\tilde{\xi} \in \tilde{\Xi}}
\underbrace{\sum_k \overbrace{\left[\alpha_k(f_k(\xi)-f_k(\tilde{\xi}))+1\right]_{+}}^{\text{(feature $k$) subdom}^k_{\alpha_k}(\xi,\tilde{\xi})}}_{
\text{(aggregated) subdom}_{\boldsymbol{\alpha}}(\xi,\tilde{\xi})}, \label{eq:subdom_agg}
\end{align}}%
with $[x]_{+} \triangleq \max(x,0)$ as the hinge function, and trajectory cost features ${\bf f}:\Xi\rightarrow\mathbb{R}^{K}_{\geq0}$. 
Other variants include defining the subdominance using relative cost features, $\text{relsubdom}^k_{\alpha_k}(\xi,\tilde{\xi}) \triangleq \left[\alpha_k \left(\frac{f_k(\xi)}{f_k(\tilde{\xi})}-1\right)+1\right]_{+}$, and/or aggregating over feature dimensions using maximization, $\text{subdom}_{\boldsymbol{\alpha}}(\xi,\tilde{\xi}) \triangleq \max_k \text{subdom}^k_{\alpha_k}(\xi,\tilde{\xi})$
\cite{ziebart2022towards}.
Like support vector machines \cite{vapnik2000bounds}, only a subset of \emph{support demonstrations}, $\tilde{\Xi}_{i}^{\text{SV}_k}(\xi) \subseteq \tilde{\Xi_i}$, for each task $i$ and feature $k$, actively influence $\boldsymbol{\theta}$:
\begin{align}\tilde{\xi} \in \tilde{\Xi}_{i}^{\text{SV}_k}(\xi) \iff
f_k(\xi) + \frac{1}{\alpha_k} \geq f_k(\tilde{\xi}). 
\label{eq:subdominance}
\end{align}
For notational convenience, when $\xi$ is indexed (e.g., by $(i,j)$ as $\xi_{i,j}$), we denote this resulting support vector set for all demonstrations of task $i$ as $\tilde{\Xi}_{i,j}^{\text{SV}_k}$ for feature $k$.
Unfortunately, optimal control is impractical for many realistic imitation learning problems of interest.
Additionally, it makes the learned cost/reward function (Fig. \ref{fig:model}) a bottleneck that can prevent the imitation policy from better fitting to (or outperforming) demonstrations.

However, subdominance has an important relationship to satisficing (Theorem \ref{thm:satisficing}): if it can be lowered to zero, acceptability of the imitator's behavior is guaranteed under mild cost function assumptions (positive linear functions of monotonic transformations of cost features).

\begin{theorem} 
\label{thm:satisficing} 
A trajectory $\xi$ with zero subdominance with respect to demonstration $\tilde{\xi}$ implies that the demonstration's corresponding aspiration set (for full trajectory aspiration functions/threholds) is satisficed by $\xi$:
$\left(\exists \alpha > {\bf 0}, \text{subdom}_{\alpha}(\xi, \tilde{\xi}) = 0\right) \implies \xi \in \text{Satisf}_{\tilde{\xi}}$.
\end{theorem}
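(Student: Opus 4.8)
The plan is to decode zero subdominance into strict Pareto-dominance of $\xi$ over $\tilde{\xi}$ in every cost feature, and then push this dominance through the assumed cost-function class to conclude that $\xi$ is no costlier than $\tilde{\xi}$ under \emph{every} admissible aspiration weighting, hence acceptable wherever $\tilde{\xi}$ is.

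First I would exploit the nonnegativity of the hinge. Since $\text{subdom}_{\alpha}(\xi,\tilde{\xi}) = \sum_k [\alpha_k(f_k(\xi)-f_k(\tilde{\xi}))+1]_{+}$ is a sum of nonnegative terms, its vanishing forces each term to vanish: $\alpha_k(f_k(\xi)-f_k(\tilde{\xi}))+1 \le 0$ for all $k$. As each $\alpha_k > 0$, this rearranges to $f_k(\xi) \le f_k(\tilde{\xi}) - 1/\alpha_k < f_k(\tilde{\xi})$, so $\xi$ strictly dominates $\tilde{\xi}$ in every feature coordinate (indeed with margin $1/\alpha_k$).

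Next I would invoke the cost-class assumption stated just before the theorem: each aspiration cost is a positive linear combination of monotonic (nondecreasing) transformations of the features, $\text{cost}_{\bf w}(\cdot) = \sum_k w_k\, g_k(f_k(\cdot))$ with $w_k \ge 0$ and $g_k$ nondecreasing. Applying $g_k$ coordinatewise to $f_k(\xi) < f_k(\tilde{\xi})$ gives $g_k(f_k(\xi)) \le g_k(f_k(\tilde{\xi}))$, and multiplying by the nonnegative weights and summing yields $\text{cost}_{\bf w}(\xi) \le \text{cost}_{\bf w}(\tilde{\xi})$ for \emph{every} admissible ${\bf w}$ appearing in the aspiration set. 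Finally, because the demonstration is itself acceptable to its demonstrator---i.e.\ $\tilde{\xi}$ satisfices each full-trajectory aspiration $({\bf w},\nu,1,T)$ in its own set, so $\text{cost}_{\bf w}(\tilde{\xi}) < \nu$---chaining the two inequalities gives $\text{cost}_{\bf w}(\xi) \le \text{cost}_{\bf w}(\tilde{\xi}) < \nu$. Hence $\xi$ satisfices every aspiration in the set, i.e.\ $\xi \in \text{Satisf}_{\tilde{\xi}}$.

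I expect the main obstacle to be the middle step---passing from coordinatewise Pareto-dominance to cost-dominance \emph{uniformly over the entire (unknown) aspiration set}. The argument cannot exploit any particular ${\bf w}$, $g_k$, or $\nu$; it must rely only on the qualitative structure of the class (nonnegative weights, monotone transforms), which is exactly why the cost-function assumption is indispensable and where care with weak-versus-strict inequalities is needed. A second point requiring care is the restriction to full-trajectory aspirations: the subdominance is defined via whole-trajectory features $f_k(\xi)$, so it only controls costs evaluated over $\xi_{1:T}$; an aspiration on a proper snippet $\xi_{t:t'}$ would not in general be governed by these features, which is precisely why the statement is limited to thresholds of the form $({\bf w},\nu,1,T)$.
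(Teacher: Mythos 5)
Your proof is correct and follows essentially the same route as the paper's: zero subdominance forces Pareto dominance of $\xi$ over $\tilde{\xi}$ in every cost feature, which under the assumed class of positive linear combinations of monotone feature transforms gives $\text{cost}_{\bf w}(\xi) \le \text{cost}_{\bf w}(\tilde{\xi}) < \nu$ for every full-trajectory aspiration, hence $\xi \in \text{Satisf}_{\tilde{\xi}}$. The only difference is one of rigor, in your favor: you make explicit what the paper's three-line sketch leaves implicit, namely that nonnegativity of the hinge forces each term to vanish (yielding strict dominance with margin $1/\alpha_k$), that monotone transforms and nonnegative weights preserve the inequality, and that the final step relies on the demonstration satisficing its own aspiration set.
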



\begin{proof}[Proof of Theorem \ref{thm:satisficing}]
Zero subdominance implies Pareto dominance of the imitator cost feature over the demonstrator cost features, which implies that the imitator is acceptable under any cost functions defining the demonstrator's acceptable set.
\begin{align}
    &\forall \boldsymbol{\alpha} \succ {\bf 0}, 
    \text{subdom}(\xi, \tilde{\xi}) = 0  \implies
    {\bf f}(\xi) \preceq {\bf f}(\tilde{\xi}) \\
    & \qquad \implies \forall  \boldsymbol{\theta} \succeq {\bf 0},  \text{cost}_\theta(\xi) \leq\text{cost}_\theta(\tilde{\xi})\\
    & \qquad \implies \xi \in \text{satisf}_{\tilde{\xi}}
\end{align}
\end{proof}

Note that the additional margin incorporated in the subdominance plays an important role in providing generalization guarantees for the imitator (Theorem \ref{minsubpg_theorem_1}) that do not exist if the imitator simply matches the features of the demonstrator on training examples.


\begin{figure}[thb]
\begin{center}
\includegraphics[width=8.50cm]{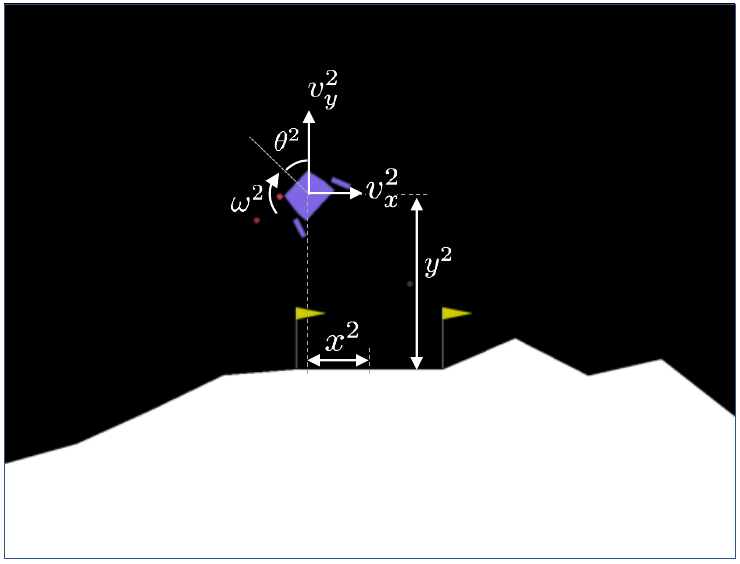}
\end{center}
\caption{Examples of \texttt{lunarlander} cost features, which are computed easily from the environment's observation vector.}
\label{fig:cost_feats_illus}
\end{figure}

{As an illustrative example}, consider two cost features for \texttt{lunar lander} depicted in Figure \ref{fig:cost_feats_illus}: its $x$ offset from the landing pad and its angular velocity $\omega$. An imitator trajectory $\xi$ which lands more precisely (i.e., smaller $x$ offset) and more smoothly (i.e., smaller angular velocity $\omega$) than a demonstration $\tilde{\xi}$, by definition has zero subdominance. Such a trajectory would also be part of the margined imitator acceptable set (Figure \ref{fig:satisficing}, right) and hence satisfices demonstration $\tilde{\xi}$.

Thus, our objective is to better minimize the subdominance by finely optimizing over a more flexible class of policies. To generalize to unseen data, we additionally seek a margin of improvement over the demonstrator, i.e., $\text{subdom}_{\alpha}$, throughout our formulation. With this added margin, the subdominance is a convex function (in trajectory features) that upper bounds the $\text{Satisf}_{\tilde{\xi}}$ non-membership, measuring how far the trajectory is from being guaranteed to satisfy the demonstrator's aspirations by a margin. 

\subsection{Snippet-focused Subdominance}

To enable snippet-level satisficing (in addition to the trajectory-level satisficing guaranteed by Theorem \ref{thm:satisficing}), we define a snippet-focused variant of the subdominance.
We select snippet pairs that maximize subdominance:
{%
\begin{align}
\text{subdom}&_{\boldsymbol{\alpha}}^{\text{snip } \mathcal{S}}(\xi,\tilde{\xi})\!=\!\!\!\!\!\!\max_{(\xi_{\text{sn}}, \tilde{\xi}_{\text{sn}}) \in \mathbb{S}(\xi, \tilde{\xi})} \!\!\!\!\text{subdom}_{\boldsymbol{\alpha}}(\xi_{\text{sn}},\tilde{\xi}_\text{sn}), 
\label{eq:snippet}
\end{align}}%
where $\mathbb{S}$ extracts snippet pairs from the full trajectories. This focuses imitation on high-quality snippets (with high subdominance) even if the larger trajectory they come from is of lower quality (and low or zero subdominance because it is easy for the imitator to outperform as a whole). 
The design of $\mathbb{S}$ provides a great deal of flexibility for defining snippets based on states and/or time steps.

\subsection{Subdominance 
for Stochastic Policies}
\label{sec:subdom_policies}

We next expand the subdominance definition to incorporate stochastic action selection from policy $\pi$:
\begin{align}
\text{subdom}_{\boldsymbol{\alpha}}(\pi,\!\tilde{\Xi})\!=\! \mathbb{E}_{\xi\sim\pi\times \tau}\!\!\left[\text{subdom}_{\boldsymbol{\alpha}}(\xi,\! \tilde{\Xi})\right]
    \!.
\end{align}
\begin{definition}
\label{minsubpg_obj_def}
    The minimally subdominant stochastic policy $\pi_{\boldsymbol{\theta}} : \mathcal{S} \rightarrow \Delta_\mathcal{A}$ minimizes the expected subdominance of the minimum cost trajectory, $\xi^*(\pi_{\boldsymbol{\theta}})$ induced by the weights $\theta$ of policy $\pi$, with respect to the set of demonstration trajectories ${\tilde{\xi}_i}$ using hinge slopes $\boldsymbol{\alpha}$:
    {\small \begin{align}
    \label{eq:minsubpg_obj_eqn}
        \min_{{\boldsymbol{\theta}}} \min_{\boldsymbol{\alpha} \succeq 0} \sum_{\text{task }i} \frac{|\tilde{\Xi}_i|}{|\tilde{\Xi}|}
        &\textup{subdom}_{\boldsymbol{\alpha}} (\pi_{\boldsymbol{\theta}}, \tilde{\Xi}_{i}) 
        \!+\!\frac{\lambda_\alpha}{2}||\boldsymbol{\alpha}||
        \!+\!\frac{\lambda_\theta}{2}||\boldsymbol{\theta}||.
    \end{align}}
\end{definition}

This optimization seeks hinge loss slopes $\boldsymbol{\alpha}$ and a policy $\pi_{\boldsymbol{\theta}}$ that both minimize the subdominance.
Naively approaching this optimization can be problematic, since $\boldsymbol{\alpha}=0$ corresponds to a degenerate local optimum.  However, the optimal $\boldsymbol{\alpha}$ values for a policy achieving at least the average feature counts of the demonstrations are not degenerate.
This suggests bootstrapping from an initial policy estimate when minimizing $\boldsymbol\alpha$ values or restricting $\boldsymbol\alpha$ values above zero.

\subsection{Subdominance Policy Gradient Optimization}
\label{sec:policy_gradient}

{To efficiently optimize the objective outlined in Definition \ref{minsubpg_obj_def}, we consider policy gradient algorithms.
We leverage Theorem \ref{minsubpg_theorem} for the computation of the policy gradient using the trajectory-based subdominance as a reinforcement signal. Corollary \ref{corollary_decompose} provides a per-state decomposition of the subdominance, making a wide range of existing policy gradient methods applicable that assign credit in a temporally consistent manner.}

\begin{theorem}
\label{minsubpg_theorem}
    Policy $\pi_{\boldsymbol{\theta}}$'s subdominance with respect to  demonstration set $\{\tilde{\Xi}_i\}$ has policy gradient:
    {\small
    \begin{align}
& \nabla_{\boldsymbol{\theta}} \sum_i \frac{|\tilde{\Xi}_i|}{|\tilde{\Xi}|} \mathbb{E}_{\xi_i \sim \pi_{\theta} \times \tau_{i}}{\left[ \textup{subdom}_{\boldsymbol{\alpha}}(\xi_i, \tilde{\Xi}_{i}) \right]} \notag 
\\
        = &
        \sum_i\frac{|\tilde{\Xi}_i|}{|\tilde{\Xi}|} \mathbb{E}_{\xi_i \sim \pi_\theta\times\tau_i}{\biggl[  \textup{subdom}_{\boldsymbol{\alpha}}(\xi_i, \tilde{\Xi}_{i}) \sum_{\mathclap{(s,a) \in \xi_i}}\nabla_{\boldsymbol{\theta}} \log\pi_{\boldsymbol{\theta}}{\left(a|s\right)}\biggr]}, \notag
    \end{align}
    }
For a set of single trajectory samples, $\xi_i \sim \pi_\theta \times \tau_i$, for each task $i$, the policy parameters $\boldsymbol{\theta}$ can be (stochastically) updated via gradient descent:
{
        $\boldsymbol{\theta} \leftarrow  \boldsymbol{\theta} +\eta\sum_i
        \sum_{(a_t,s_t) \in \xi_i}G_{t}\nabla_{\boldsymbol{\theta}} \log\pi_{\boldsymbol{\theta}}\left(a_t|s_t\right)
$,   %
    }%
    where $G_{t}$ is any function of the full or future expected subdominance, $\textup{subdom}_{\boldsymbol{\alpha}}(\xi_i, \tilde{\Xi}_{i})$, such as the Q-value, the advantage estimate, or the trajectory return \cite{sutton1999policy}.
\end{theorem}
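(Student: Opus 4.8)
The plan is to recognize this as the \textbf{score-function (likelihood-ratio) policy gradient} estimator, with the trajectory subdominance $\text{subdom}_{\boldsymbol{\alpha}}(\xi_i,\tilde{\Xi}_i)$ playing the role of a (cost-like) trajectory return. The crucial structural observation is that, once a trajectory $\xi_i$ is realized, its subdominance depends only on the \emph{features} of that fixed trajectory together with the fixed demonstration set and hinge slopes $\boldsymbol{\alpha}$ --- it carries \underline{no} direct dependence on the policy parameters $\boldsymbol{\theta}$. All of the $\boldsymbol{\theta}$-dependence therefore enters solely through the trajectory sampling distribution $\xi_i\sim\pi_{\boldsymbol{\theta}}\times\tau_i$, matching the stochastic-policy subdominance $\text{subdom}_{\boldsymbol{\alpha}}(\pi,\tilde{\Xi})=\mathbb{E}_{\xi\sim\pi\times\tau}[\text{subdom}_{\boldsymbol{\alpha}}(\xi,\tilde{\Xi})]$ used in Definition~\ref{minsubpg_obj_def}. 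This framing lets us differentiate the measure while treating the subdominance as a constant, and it sidesteps the non-differentiability of the hinge $[\cdot]_+$ entirely: we never backpropagate through it.

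First I would write the trajectory likelihood in factored form, $P_{\boldsymbol{\theta}}(\xi_i) = P(s_1)\prod_{t}\pi_{\boldsymbol{\theta}}(a_t\mid s_t)\,\tau_i(s_{t+1}\mid s_t,a_t)$, and expand the per-task expectation as $\mathbb{E}_{\xi_i}[\text{subdom}_{\boldsymbol{\alpha}}(\xi_i,\tilde{\Xi}_i)]=\sum_{\xi_i}P_{\boldsymbol{\theta}}(\xi_i)\,\text{subdom}_{\boldsymbol{\alpha}}(\xi_i,\tilde{\Xi}_i)$ (an integral in the continuous case). Applying $\nabla_{\boldsymbol{\theta}}$ and interchanging it with the sum/integral (justified under standard regularity / dominated-convergence conditions), the subdominance factors out, leaving $\nabla_{\boldsymbol{\theta}}P_{\boldsymbol{\theta}}(\xi_i)$. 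The log-derivative identity $\nabla_{\boldsymbol{\theta}}P_{\boldsymbol{\theta}}(\xi_i)=P_{\boldsymbol{\theta}}(\xi_i)\nabla_{\boldsymbol{\theta}}\log P_{\boldsymbol{\theta}}(\xi_i)$ then reintroduces the sampling measure, recovering an expectation. The key simplification is that in $\log P_{\boldsymbol{\theta}}(\xi_i)=\log P(s_1)+\sum_t[\log\pi_{\boldsymbol{\theta}}(a_t\mid s_t)+\log\tau_i(s_{t+1}\mid s_t,a_t)]$ the initial-state and transition terms are independent of $\boldsymbol{\theta}$ and vanish under the gradient, so $\nabla_{\boldsymbol{\theta}}\log P_{\boldsymbol{\theta}}(\xi_i)=\sum_{(s,a)\in\xi_i}\nabla_{\boldsymbol{\theta}}\log\pi_{\boldsymbol{\theta}}(a\mid s)$. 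Substituting back and summing over tasks with the weights $|\tilde{\Xi}_i|/|\tilde{\Xi}|$ yields exactly the claimed gradient, and the single-sample stochastic update follows because one trajectory per task is an unbiased Monte Carlo estimate of each per-task expectation.

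Finally, to justify replacing the full-trajectory subdominance with a per-step credit term $G_t$ (Q-value, advantage, or a future-return estimate), I would invoke the standard \emph{causality} argument of policy-gradient theory: since the action at time $t$ cannot influence cost features accrued strictly before $t$, the cross terms coupling $\nabla_{\boldsymbol{\theta}}\log\pi_{\boldsymbol{\theta}}(a_t\mid s_t)$ with past contributions have zero expectation, and any state-dependent baseline likewise contributes zero in expectation. This allows $G_t$ to be any such estimator without biasing the gradient. The hard part will not be the algebra but making the ``subdominance-as-constant-return'' framing fully rigorous --- in particular, confirming that the additive, trajectory-feature form of $\text{subdom}_{\boldsymbol{\alpha}}$ admits a temporally consistent per-state decomposition so that using future-return quantities for $G_t$ is legitimate, which is precisely the content deferred to Corollary~\ref{corollary_decompose}.
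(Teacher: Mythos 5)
Your proof is correct, but it takes a genuinely different route from the paper's. The paper does not derive the score-function identity from first principles: it takes the general policy-gradient form $g = \mathbb{E}_{\xi\sim\pi\times\tau}\bigl[\sum_{(s_t,a_t)\in\xi}G_t\nabla_{\boldsymbol{\theta}}\log\pi_{\boldsymbol{\theta}}(a_t|s_t)\bigr]$ as given (citing Sutton et al.), then invokes the per-state decomposition of Corollary \ref{corollary_decompose} to define a per-state reward $r(s_t)=-\textup{subdom}_{\boldsymbol{\alpha}}^{[\Sigma]}(s_t,\tilde{\Xi})$, sets $G_t$ to the total return under this reward, and re-aggregates the double sum to recover the trajectory-level expression stated in the theorem. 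In other words, the paper verifies that subdominance fits into existing policy-gradient machinery as a reward signal, with the per-state decomposition doing the structural work up front. You instead prove the main identity directly: factor the trajectory likelihood, differentiate under the integral, apply the log-derivative trick, and observe that the initial-state and transition terms vanish --- which makes explicit the key structural fact (that $\textup{subdom}_{\boldsymbol{\alpha}}(\xi_i,\tilde{\Xi}_i)$ has no direct $\boldsymbol{\theta}$-dependence, so the hinge is never differentiated) and requires Corollary \ref{corollary_decompose} only afterwards, to license future-return choices of $G_t$ via the standard causality/baseline argument. Your route is more self-contained and makes the unbiasedness of the single-sample update transparent; the paper's route is shorter given the standard theorem and connects more directly to the implementation, where the decomposed subdominance is literally fed to PPO as a per-step reward. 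Both are sound, and both defer the same content (temporally consistent credit assignment) to the corollary.
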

The proof for Theorem \ref{minsubpg_theorem} is provided in our supplementary material.
{We now present a state-based decomposition of trajectory level subdominance. Subdominance is computed at the final state to determine which features contribute to it, and the contribution of each state-action pair to the total trajectory subdominance is the calculated.}

\begin{corollary}
        \label{corollary_decompose}
        
        The absolute and relative subdominances for a trajectory $\xi$, 
        with respect to a set of demonstrations $\Xi$ can be further expanded as: 
        {\small
        \begin{align*}
        & 
        \textup{subdom}_{\boldsymbol{\alpha}}(\xi, \tilde{\Xi})
        = \!\!
        \sum_{s_t \in \xi,k} \left(
                \!
                \frac{\tilde{C}_{\xi,\tilde{\Xi}}^{k}}{|\xi|} + 
                \tilde{C}_{\xi,\tilde{\Xi}}^{k}\alpha_k f_k(s_t) -
                \frac{\alpha_k \tilde{f}_{k,\xi,\tilde{\Xi}}^{\textup{abs}}}{|\xi||\tilde{\Xi}|}
                \right)
            ; \notag\\ 
        &  
        \textup{relsubdom}_{\boldsymbol{\alpha}}(\xi, \tilde{\Xi}) 
        = \!\!\!
        \sum_{s_t \in \xi,k} \! \left(
                \frac{\tilde{C}_{\xi,\tilde{\Xi}}^{k}(1 \! - \! \alpha_k)}{|\xi|} + \frac{\alpha_k f_k(s_t)\tilde{f}_{k,\xi,\tilde{\Xi}}^{\textup{rel}}}{|\tilde{\Xi}|}\right)
        ,\notag
        \end{align*}}%
        where $\tilde{C}_{\xi,\tilde{\Xi}}^{k}=\frac{|\tilde{\Xi}^{\textup{SV}_k}(\xi)|}{|\tilde{\Xi}|}$, $\tilde{f}_{k,\xi,\tilde{\Xi}}^{\textup{abs}}=\sum\limits_{\tilde{\xi} \in \tilde{\Xi}^{\textup{SV}_k}(\xi)}\sum\limits_{s'_{t} \in \tilde{\xi}} f_k(s'_{t})$, 
        and $\tilde{f}_{k,\xi,\tilde{\Xi}}^{\textup{rel}}=\sum_{\tilde{\xi} \in \tilde{\Xi}^{\textup{SV}_k}(\xi)} \left( \sum_{s'_{t} \in \tilde{\xi}} f_k(s'_{t}) \right)^{-1}$. 


\end{corollary}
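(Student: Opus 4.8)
The plan is to expand the aggregated subdominance of \eqref{eq:subdom_agg} by exploiting the support-vector characterization in \eqref{eq:subdominance}, which identifies exactly those demonstrations for which each per-feature hinge is active. First I would observe that for any $\tilde{\xi} \notin \tilde{\Xi}^{\text{SV}_k}(\xi)$ the argument of the feature-$k$ hinge is negative, so that term contributes zero; conversely, for $\tilde{\xi} \in \tilde{\Xi}^{\text{SV}_k}(\xi)$ the hinge is active, so I can drop the $[\cdot]_{+}$ and replace it by its linear argument $\alpha_k(f_k(\xi) - f_k(\tilde{\xi})) + 1$. This collapses the double sum over all of $\tilde{\Xi}$ into a sum over the support set alone. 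Because the membership test in \eqref{eq:subdominance} depends on the completed trajectory feature count $f_k(\xi)$, this determination is naturally made once the full trajectory (its final state) is reached, matching the informal description preceding the corollary.

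Next I would collect terms across the support set. Summing the constant $1$ over the $|\tilde{\Xi}^{\text{SV}_k}(\xi)|$ support demonstrations and dividing by $|\tilde{\Xi}|$ produces the fraction $\tilde{C}_{\xi,\tilde{\Xi}}^{k}$; the term $\alpha_k f_k(\xi)$ is constant across the inner sum and so picks up the same count; and the $-\alpha_k f_k(\tilde{\xi})$ terms aggregate, by definition of $\tilde{f}^{\textup{abs}}$, into $-\alpha_k \tilde{f}_{k,\xi,\tilde{\Xi}}^{\textup{abs}}/|\tilde{\Xi}|$. This yields the trajectory-level form $\sum_k ( \tilde{C}_{\xi,\tilde{\Xi}}^{k}(\alpha_k f_k(\xi) + 1) - \alpha_k \tilde{f}_{k,\xi,\tilde{\Xi}}^{\textup{abs}}/|\tilde{\Xi}| )$. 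The final step pushes this inside a per-state sum: using $f_k(\xi) = \sum_{s_t \in \xi} f_k(s_t)$ the $\alpha_k f_k(\xi)$ term decomposes exactly into per-state contributions $\tilde{C}_{\xi,\tilde{\Xi}}^{k}\alpha_k f_k(s_t)$, while the two terms with no explicit $s_t$ dependence are distributed uniformly over the $|\xi|$ states, introducing the $1/|\xi|$ factors that appear in the stated decomposition.

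The relative case proceeds identically, with $f_k(\xi) - f_k(\tilde{\xi})$ replaced by $f_k(\xi)/f_k(\tilde{\xi}) - 1$: the active-hinge argument becomes $\alpha_k f_k(\xi)/f_k(\tilde{\xi}) + (1 - \alpha_k)$, the reciprocal demonstration features aggregate into $\tilde{f}_{k,\xi,\tilde{\Xi}}^{\textup{rel}}$, and the constant $(1 - \alpha_k)$ terms collect into $\tilde{C}_{\xi,\tilde{\Xi}}^{k}(1 - \alpha_k)$ before being spread across states. I expect the only delicate point to be the bookkeeping: cleanly separating the genuinely per-state quantity $f_k(s_t)$, which factors out of $f_k(\xi)$ exactly, from the trajectory-global constants (the support-vector count and the demonstration feature sums), which depend on $\xi$ only through the support-set determination and must be apportioned uniformly across states to obtain a temporally consistent per-state credit assignment. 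No subgradient or convexity argument is required, since on the support set the hinge is smooth; the corollary is ultimately an exact algebraic rearrangement conditioned on the support-vector identities of \eqref{eq:subdominance}.
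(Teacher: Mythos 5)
Your proposal is correct and follows essentially the same route as the paper's proof: both arguments drop the hinge by restricting the sum to the support-vector set defined in Eq.~\eqref{eq:subdominance} (where the hinge argument is nonnegative, and is negative hence zero elsewhere), collect the constant and demonstration-feature terms into $\tilde{C}_{\xi,\tilde{\Xi}}^{k}$, $\tilde{f}_{k,\xi,\tilde{\Xi}}^{\textup{abs}}$, $\tilde{f}_{k,\xi,\tilde{\Xi}}^{\textup{rel}}$, and then apportion the trajectory-global terms uniformly over the $|\xi|$ states while the $f_k(\xi)=\sum_{s_t\in\xi}f_k(s_t)$ term decomposes exactly. The only difference is cosmetic ordering of the algebra (you form the trajectory-level expression before spreading it across states, whereas the paper interleaves these steps), so no substantive comparison is needed.
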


This decomposition enables state-of-the-art reinforcement learning algorithms \cite{schulman2017proximal} that assign credit to actions in a causally consistent manner (i.e., only future returns influence an action's updates) 
to be employed. Further flexibility is gained via the choice of policy representation. 


\subsection{Subdominance Policy Gradient Algorithms}
Algorithm \ref{alg:the_alg} outlines our 
approach for optimization.
For each task ($i$), 
a trajectory is rolled out by sampling from the current learned policy (Line 2).
The cost features of the sampled trajectory and the demonstrated trajectory are compared to determine which dimensions the sampled trajectory does not sufficiently outperform the demonstration, and are thus support vectors (Line 4).
Here, the $\alpha$ values defining margin slopes (Eq. \eqref{eq:subdominance}) can either be optimized numerically (e.g., using stochastic gradient descent) or analytically \cite{memarrast2023superhuman}.
A policy update is then employed to reduce the 
subdominance (Line 7).

\begin{algorithm}[H]
  \caption{Online subdominance policy gradient}
  
  \label{alg:the_alg}
  \begin{algorithmic}[1]
    \WHILE{ $\boldsymbol{\theta}$ not converged }
    \STATE Sample a set of $M$ trajectories $\Xi_{i}=\{\xi_i^{(m)}\}_{m=0}^{M}$ from policy $\pi_{\boldsymbol{\theta}} \times \tau_i$ for each task $i$ \label{op1}
    \FOR{{\bf each } $\xi_i^{(m)}\in\Xi_{i}$}
        \STATE Find support vectors $\tilde{\Xi}_{i,m}^{\text{SV}_{k}}\,\text{(and }\boldsymbol{\alpha}\text{) given}\, \xi_i^{(m)}$
        \STATE Compute loss $\mathcal{L}(\xi_i^{(m)})=\text{subdom}_{\boldsymbol{\alpha}}(\xi_i^{(m)},\tilde{\Xi}_i)$
    \ENDFOR

    \STATE Update $\boldsymbol{\theta}$ via policy gradient update rule on $\mathcal{L}(\xi_i^{(m)})$

    \ENDWHILE
  \end{algorithmic}
 \end{algorithm}

{For snippet-based optimization }(Eq. \eqref{eq:snippet}), the snippet extractor $\mathbb{S}$ produces snippet pairs as support vector candidates.
This can uncover supporting snippets from high-quality portions of trajectories that are lower quality overall (and not supporting trajectories). 
The highest subdominance snippets are then used to compute subdominance losses (Line 5) and to perform policy gradient updates (Line 7).


Algorithm \ref{alg:the_alg_snip_opt} describes a practical approach for snippet-based optimization. We consider the snippet generator, $\mathbb{S}$, that produces snippets of various lengths starting from states that coincide between the rollout and the demonstration.
Unfortunately, demonstrations and rollouts may share very few states (apart from the initial state) in practice. 
To more effectively uncover supporting snippets, we roll out trajectories from randomly-chosen states along the demonstrator trajectory, and compare these to snippets from the demonstration that begin from that state.

\begin{algorithm}[h]
  \caption{Snippet-based subdominance policy gradient}
  
  \label{alg:the_alg_snip_opt}
  \begin{algorithmic}[1]\small
    \WHILE{ $\boldsymbol{\theta}$ not converged }
    \STATE Sample demonstration $\tilde{\xi}_j$ from demonstration set $\tilde{\Xi}$ \label{snip_op1}
    \STATE Sample state $s^{(j)}_{t}\sim\tilde{\xi}_j$ such that $0<t<|\tilde{\xi}_j|$\label{snip_op2}
    \STATE Set $s_0\gets s^{(j)}_{t}$ \label{snip_op3}
    \STATE Sample imitator trajectory $\xi\sim\pi_{\boldsymbol{\theta}}(\cdot|s_0)$
    \STATE Find largest support vector snippets pair(s)
$(\xi_{\text{snip}},\tilde{\xi}_{\text{snip}})$ (and $\boldsymbol{\alpha}$) from snippet pair candidates $\mathbb{S}(\xi,\tilde{\xi}_{t:T})$
    \STATE Compute loss $\mathcal{L}(\xi_\text{snip})=\text{subdom}_{\boldsymbol{\alpha}}(\xi_\text{snip},\tilde{\xi}_\text{snip})$

    \STATE Update $\boldsymbol{\theta}$ via any policy gradient update rule on $\mathcal{L}(\xi_\text{snip})$
    \ENDWHILE
  \end{algorithmic}
 \end{algorithm}


When deploying or simulating a policy is expensive,\textbf{ offline policy gradient methods} that are based entirely on the set of demonstrated trajectories can instead be employed. 

\begin{corollary}
\label{coroll_offpolicy}
Offline policy gradient (MinSubFI\textsubscript{OFF}) employs importance weighting to estimate the gradient for online subdominance minimization from available demonstrations:
: {\small
    \begin{align}
    \label{eq:offline_theta_update}
        \boldsymbol{\theta}\!\gets\!\boldsymbol{\theta}\!+\! \eta\sum_{\mathclap{i,\tilde{\xi}_{i,j} \in \tilde{\Xi}_i}}
\tilde{r}_{\theta,\tilde{\pi}}^{(i,j)}\textup{subdom}_{\boldsymbol{\alpha}}(\tilde{\xi}_{i,j}, \tilde{\Xi}_{i})
        \sum_{\mathclap{(s,a) \in \tilde{\xi}_{i,j}}} \nabla_{\boldsymbol{\theta}} \!\log\pi_{\boldsymbol{\theta}}\!\left(a|s\right),
    \end{align}}
    where $\tilde{r}_{\theta,\tilde{\pi}}^{(i,j)}=\frac{\pi_{\boldsymbol{\theta}}(\tilde{\xi}_{i,j})}{\tilde{\pi}(\tilde{\xi}_{i,j})}$ is the importance ratio, and $\tilde{\pi}$ is an estimate of the demonstrator's policy.
\end{corollary}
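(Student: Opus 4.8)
The plan is to obtain the offline update as a direct consequence of the online policy gradient already established in Theorem~\ref{minsubpg_theorem}, applying a single importance-sampling change of measure so that the expectation---originally taken over \emph{rollouts} from $\pi_{\boldsymbol{\theta}}\times\tau_i$---is instead expressed over the demonstrator's trajectory distribution $\tilde{\pi}\times\tau_i$, which we then approximate by a Monte Carlo average over the demonstrations actually available in $\tilde{\Xi}_i$. Since deploying or simulating $\pi_{\boldsymbol{\theta}}$ is precisely what we wish to avoid, the whole point is to re-weight the demonstrated trajectories so they serve as samples for the on-policy gradient.

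First I would isolate, for each task $i$, the per-task integrand from Theorem~\ref{minsubpg_theorem}, namely $g_{\boldsymbol{\theta}}(\xi) = \textup{subdom}_{\boldsymbol{\alpha}}(\xi, \tilde{\Xi}_i)\sum_{(s,a)\in\xi}\nabla_{\boldsymbol{\theta}}\log\pi_{\boldsymbol{\theta}}(a\mid s)$, so that the online gradient is $\sum_i \frac{|\tilde{\Xi}_i|}{|\tilde{\Xi}|}\,\mathbb{E}_{\xi\sim\pi_{\boldsymbol{\theta}}\times\tau_i}[g_{\boldsymbol{\theta}}(\xi)]$. The change of measure is then the identity
\begin{align}
\mathbb{E}_{\xi \sim \pi_{\boldsymbol{\theta}} \times \tau_i}\!\left[g_{\boldsymbol{\theta}}(\xi)\right] = \mathbb{E}_{\xi \sim \tilde{\pi} \times \tau_i}\!\left[\frac{P(\xi \mid \pi_{\boldsymbol{\theta}}, \tau_i)}{P(\xi \mid \tilde{\pi}, \tau_i)}\, g_{\boldsymbol{\theta}}(\xi)\right]. \notag
\end{align}
The key simplification comes from the trajectory factorization $P(\xi\mid\pi,\tau_i)=\tau_i(s_1)\prod_t \pi(a_t\mid s_t)\,\tau_i(s_{t+1}\mid s_t,a_t)$: the shared transition terms $\tau_i$---which are unknown to the imitator---cancel in the ratio, leaving exactly the policy-only importance ratio $\tilde{r}_{\theta,\tilde{\pi}}^{(i,j)}=\pi_{\boldsymbol{\theta}}(\tilde{\xi}_{i,j})/\tilde{\pi}(\tilde{\xi}_{i,j})$ when evaluated on demonstration $\tilde{\xi}_{i,j}$. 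Replacing the expectation over $\tilde{\pi}\times\tau_i$ by the empirical average over the demonstrations in $\tilde{\Xi}_i$ and substituting $g_{\boldsymbol{\theta}}$ then reproduces the stated stochastic update~\eqref{eq:offline_theta_update}.

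The main obstacle is not the algebra but the validity of the re-weighting: the importance-sampling step is exact only when $\tilde{\pi}$ is the true data-generating demonstrator policy and satisfies the absolute-continuity condition $\tilde{\pi}(\xi)>0$ wherever $\pi_{\boldsymbol{\theta}}(\xi)>0$. In practice $\tilde{\pi}$ is only an \emph{estimate}, so the estimator is unbiased only asymptotically as that estimate improves, and the product-of-ratios form of $\tilde{r}_{\theta,\tilde{\pi}}^{(i,j)}$ can induce substantial variance once $\pi_{\boldsymbol{\theta}}$ drifts away from $\tilde{\pi}$. I would therefore state these support and fidelity conditions explicitly, and emphasize that it is precisely the cancellation of the unknown dynamics $\tau_i$ that makes the on-policy correction computable from demonstrations alone.
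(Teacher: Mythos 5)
Your derivation is correct and matches the paper's intended argument: the paper states Corollary~\ref{coroll_offpolicy} without an explicit proof, treating it as the standard importance-sampling change of measure applied to the online policy gradient of Theorem~\ref{minsubpg_theorem}, which is exactly what you carried out---including the key cancellation of the unknown dynamics $\tau_i$ in the trajectory-probability ratio (so that $\tilde{r}_{\theta,\tilde{\pi}}^{(i,j)}$ involves only action probabilities) and the replacement of the expectation under $\tilde{\pi}\times\tau_i$ by the empirical average over $\tilde{\Xi}_i$, with the residual $1/|\tilde{\Xi}|$ normalization absorbed into $\eta$. Your additional caveats (absolute continuity of $\tilde{\pi}$ with respect to $\pi_{\boldsymbol{\theta}}$, bias from $\tilde{\pi}$ being only a behavior-cloned estimate, and variance of the product-of-ratios weight) go beyond what the paper makes explicit but are consistent with its use of the corollary.
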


The offline policy gradient method (Corollary \ref{coroll_offpolicy}) is outlined in Algorithm \ref{alg:the_alg_offpolicy} below. 

\begin{algorithm}[h!]\small
  \caption{Offline, joint stochastic optimization}
  \label{alg:the_alg_offpolicy}
  \begin{algorithmic}[1]          
    \STATE Estimate $\tilde{\pi}_{\text{BC}}$ using behavior cloning on demonstrations $\tilde{\Xi}$  
    \WHILE{ $\boldsymbol{\theta}$ not converged }
    
        \FOR{{\bf each } $\tilde{\xi}_{i,j} \in \tilde{\Xi}_i$}        
             \STATE Find support vectors $
                     \tilde{\Xi}_{i,j}^{\text{SV}_{k}}(
                     \alpha_k)\, \text{given}\, \tilde{\xi}_{i,j}
                     $
            \FOR{{\bf each } $k$}

                        \STATE $\alpha_k\!\gets\!\alpha_k \exp\bigl\{-{\eta}_{t}'
                        \tilde{r}_{\theta,\tilde{\pi}_\text{BC}}^{(i,j)} 
                        \sum\limits_{\mathclap{\tilde{\xi}' \in  \tilde{\Xi}_{i,j}^{\text{SV}_{k}}
                        }}
                        \bigl( f_{k}(\tilde{\xi}_{i,j})\!-\!f_{k}(\tilde{\xi}')\bigr)\!
                        +\!\lambda|\tilde{\Xi}|\alpha_k\bigr\}                        
                        $
            \ENDFOR
            \STATE Update $\boldsymbol{\theta}$ according to Equation \eqref{eq:offline_theta_update}.
        \ENDFOR
    \ENDWHILE
  \end{algorithmic}
 \end{algorithm}


\subsection{Generalization Bound Analysis}
\label{sec:generalization}

We now define the notion of a $\gamma$--{\bf satisficing} stochastic policies and present a generalization bound.

\begin{definition}
\label{gmma_suprhumn}
    A policy is considered $\gamma$--{\bf satisficing} (or $\gamma$--{\bf acceptable}) for cost features ${\bf f}$ and distribution of demonstrated trajectories $P(\tilde{\xi})$, if its trajectories $\xi$ drawn from policy $\pi$ satisfies with probability at least $\gamma$:
        $P(\xi \in \Omega_{\tilde{\xi}}) \geq \gamma$.
\end{definition}

A snippet-based extension of this definition considers $\xi \in \Omega_{\tilde{\xi}}$ if and only if the subdominance is zero for all max-min snippet pairs (Eq. \eqref{eq:snippet}).  

\begin{theorem}
\label{minsubpg_theorem_1}
    The policy minimizing the absolute or relative $\textup{subdom}_{\boldsymbol{\alpha}} \left( \xi^* ( \pi_{\boldsymbol{\theta}} ) , \tilde{\xi}_i \right)$ (N iid demonstrations) with realizable features that are convex sets has the support vector set $\left\{ \tilde{\Xi}_{\text{SV}_k}(\xi^*(\pi_{\boldsymbol{\theta}}),\alpha_k) \right\}$ and is on average $\gamma$--{\bf satisficing} on the population distribution with: $\gamma = 1 - \frac{1}{N} \Big\lvert \bigcup\limits_{k=1}^{K} \tilde{\Xi}_{\text{SV}_k}(\xi^*(\pi_{\boldsymbol{\theta}}),\alpha_k)  \Big\rvert$.
\end{theorem}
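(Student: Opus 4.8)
The plan is to adapt the classical leave-one-out (LOO) stability argument for support vector machines to the subdominance objective. First I would set up the LOO estimator: for each demonstration $\tilde{\xi}_j$, retrain the policy (and the slopes $\boldsymbol{\alpha}$) on the remaining $N-1$ demonstrations $\tilde{\Xi} \setminus \{\tilde{\xi}_j\}$, obtaining an induced minimum-cost trajectory $\xi^*_{-j}$, and count $\tilde{\xi}_j$ as a LOO error whenever $\xi^*_{-j}$ fails to satisfice it---equivalently, by Theorem \ref{thm:satisficing}, whenever $\textup{subdom}_{\boldsymbol{\alpha}}(\xi^*_{-j}, \tilde{\xi}_j) > 0$. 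The goal is to bound the LOO error count by the number of support vectors and then transfer this to the population.

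The key step is a stability lemma: if $\tilde{\xi}_j$ is not a support vector of the full-sample solution for any feature $k$, i.e.\ $\tilde{\xi}_j \notin \bigcup_k \tilde{\Xi}_{\textup{SV}_k}(\xi^*(\pi_{\boldsymbol{\theta}}),\alpha_k)$, then $\xi^*_{-j}$ coincides with the full-sample optimizer $\xi^*(\pi_{\boldsymbol{\theta}})$ and $\tilde{\xi}_j$ is satisficed by it. This follows because, by the support-vector characterization in \eqref{eq:subdominance}, a non-support demonstration has $f_k(\xi^*) + 1/\alpha_k < f_k(\tilde{\xi}_j)$ for every $k$, so each hinge term $[\alpha_k(f_k(\xi^*) - f_k(\tilde{\xi}_j)) + 1]_{+}$ is inactive (identically zero) and contributes neither to the objective value nor to its subgradient at the full-sample optimum. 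Removing such a term therefore leaves the objective unchanged in a neighborhood of the optimum; invoking the convexity of the realizable feature set---so that minimization over induced feature vectors is a convex program whose global optimum is preserved under deletion of an inactive term---the optimizer is unchanged. Since every one of $\tilde{\xi}_j$'s hinges is strictly zero, $\textup{subdom}_{\boldsymbol{\alpha}}(\xi^*, \tilde{\xi}_j) = 0$, so $\tilde{\xi}_j$ is satisficed. The identical inactivity condition holds for the relative hinge $[\alpha_k(f_k(\xi)/f_k(\tilde{\xi}) - 1) + 1]_{+}$, so the argument covers both the absolute and relative subdominances.

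Contrapositively, every LOO error must be a support vector, so the LOO error count is at most $\bigl\lvert \bigcup_k \tilde{\Xi}_{\textup{SV}_k}(\xi^*(\pi_{\boldsymbol{\theta}}),\alpha_k) \bigr\rvert$ and the LOO error rate is at most $\frac{1}{N}\bigl\lvert \bigcup_k \tilde{\Xi}_{\textup{SV}_k} \bigr\rvert$. Finally I would invoke the classical leave-one-out expectation identity: for i.i.d.\ samples the expected LOO error rate over draws of $N$ demonstrations equals the expected population non-satisficing probability of a policy trained on $N-1$ demonstrations. Rearranging $1$ minus this bound yields the claimed average satisficing rate $\gamma = 1 - \frac{1}{N}\bigl\lvert \bigcup_{k=1}^{K} \tilde{\Xi}_{\textup{SV}_k}(\xi^*(\pi_{\boldsymbol{\theta}}),\alpha_k) \bigr\rvert$.

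The main obstacle I anticipate is making the stability lemma fully rigorous. The delicate point is that the optimization is jointly over the policy parameters $\boldsymbol{\theta}$ and the slopes $\boldsymbol{\alpha}$, so a priori removing $\tilde{\xi}_j$ could perturb both; I must argue that inactivity of all of $\tilde{\xi}_j$'s hinge terms at the full-sample optimum makes the full-sample solution satisfy the first-order optimality conditions of the reduced problem, and that convexity of the realizable feature set upgrades this stationarity to genuine optimality rather than a mere critical point. A secondary technical care is handling possible non-uniqueness of the optimizer, where the bound should be read relative to the support-vector set of the particular optimizer selected by the algorithm.
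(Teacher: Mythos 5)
You take essentially the same route as the paper: a Vapnik-style leave-one-out argument in which LOO errors are bounded by support vectors, so the population non-satisficing rate is bounded by $\frac{1}{N}\bigl\lvert\bigcup_{k}\tilde{\Xi}_{\text{SV}_k}\bigr\rvert$ via the almost-unbiasedness of leave-one-out error under IID sampling \cite{vapnik2000bounds}. The paper's entire proof is your third and fourth steps compressed into two sentences, plus one lemma that does the work of your ``stability lemma.''

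Where you differ---and where your argument as written has a gap---is in how that stability step is justified. You invoke ``convexity of the realizable feature set, so that minimization over induced feature vectors is a convex program,'' and propose to upgrade first-order stationarity of the reduced problem to global optimality. But the objective is optimized jointly over the realized features and the slopes $\boldsymbol{\alpha}$, and it is only biconvex: with $\boldsymbol{\alpha}$ fixed it is convex in $\mathbf{f}$, and vice versa, but the products $\alpha_k f_k(\xi)$ make it non-convex jointly, so stationarity does not imply global optimality. This is precisely the obstacle you flag in your final paragraph, and the paper resolves it differently: its Lemma \ref{lem:quasi} minimizes over $\boldsymbol{\alpha}$ analytically first and observes that $\min_{\alpha_k}\textup{subdom}^k_{\alpha_k}(f_k,\tilde{\Xi})$ is a monotone increasing function of $f_k$, hence quasiconvex in $\mathbf{f}$; its sublevel sets intersected with the convex realizable feature set are convex, so the $\boldsymbol{\alpha}$-minimized problem has no distinct local optima, and deleting inactive (non-support) hinge terms therefore cannot move the global optimum. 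If you replace your convex-program stationarity argument with this monotonicity/quasiconvexity observation, your proof matches the paper's.
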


This bound motivates subdominance minimization for producing demonstrator-acceptable behavior.

\subsection{Learning a Cost Feature Representation}
Though shaping the imitator's behavior from demonstrations is much less dependent on a highly-expressive cost model/features under our approach,
hand-engineering features can still be a significant burden in many domains.
To mitigate this, we propose to learn a set of cost features $\textbf{f}_{\psi}$ from pairwise preferences over demonstrations.

\begin{definition}
    \label{thrm_repr_rank}
    Given pairwise preferences over demonstrations $\tilde{\mathcal{D}}=\{\tilde{\xi}_i\prec\tilde{\xi}_j | \tilde{\xi}_i,\tilde{\xi}_j\in \Tilde{\Xi}\}$, and a sufficiently-rich function class $\mathcal{F}$, a preference-preserving (latent) representation $\textbf{f}_{\psi}:\mathcal{S}\rightarrow\mathbb{R}^{K'}_{\geq0}$ (of dimensionality  $K'$) can be learned by minimizing:
$\argmin_{\textbf{f}_{\psi}\in\mathcal{F}}\mathbb{E}_{(\tilde{\xi}_i\prec\tilde{\xi}_j)\sim\tilde{\mathcal{D}}}\left[-\log
    \frac{e^{c_{i,j}}}{e^{c_{i,j}} + e^{c_{j,i}}}
    \right]$,
    where $c_{i,j} =\text{subdom}_{\alpha}(\textbf{f}_{\psi}(\tilde{\xi}_i),\textbf{f}_{\psi}(\tilde{\xi}_j))$.
\end{definition}

Given the learned feature representation, a $\gamma$-satisficing policy can be learned via subdominance minimization in Algorithm \ref{alg:the_alg}. This differs from the formulation of TREX in two 
key aspects. First, under the exponential preference model \cite{bradley1952rank,christiano2017deep,brown2019extrapolating,brown2020better}, we employ subdominance between pairs of demonstrations as a loss function, rather than a linear cost function. The second difference emerges from choosing subdominance as the loss function: our formulation permits learning latent representations of \emph{any} dimensionality, rather than just a scalar cost signal; such a vector representation allows us to recover multiple, competing objectives from preferences, rather than arbitrarily extrapolating over a scalar reward signal.

\section{Experiments}
\subsection{Demonstrations}
We conduct experiments using a mix of simple, classic control environments ({\tt cartpole}, {\tt lunarlander}) and complex robotics environments (Mujoco {\tt hopper}, {\tt halfcheetah}, {\tt walker}) from OpenAI Gym \cite{brockman2016gym}. For each environment, we obtain 100 demonstrations from 
a suboptimal policy learned using PPO. This ensures that the majority of the resulting demonstrations are suboptimal and noisy. 
Human demonstrations for the {\tt lunarlander} used in Section \ref{sec:lander_human} are collected from non-expert, human players using the joysticks on an XBox 360 video game controller. Demonstration return statistics for environment-specific demonstration sets of varying quality are provided in Table \ref{tab:demo_moments}.

\begin{table}[h]
    \centering
    \caption{True return statistics of demonstration sets for each environment (100 demonstrations each).}
    \begin{tabular}{rrrrrr}
         \toprule
         Demo Type & Environment & Min & Mean & Max \\
         \midrule
           \multirow{5}{*}{synthetic} & {\tt cartpole}    & $10$    &  $76$   &  $194$ \\
           & {\tt lunarlander} &-196 & 112 & 284\\
           & {\tt hopper}      & $6$     &  $939$  &  $3441$\\
           & {\tt halfcheetah} & $-83$   &  $680$  &  $1483$\\
           & {\tt walker} & $18$   &  $968$  &  $4293$\\
         \midrule
           \multirow{1}{*}{human} & {\tt lunarlander}    & $-419$    &  $173$   &  $303$ \\
    \bottomrule
    \end{tabular}
    \label{tab:demo_moments}
    \end{table}

Cost features are environment-specific as follows:
\begin{itemize}
\item {\tt cartpole}: $(\text{cart position})^2$, $(\text{cart velocity})^2$, and $(\text{pole angle})^2$, $(\text{pole angular velocity})^2$;
\item {\tt lunarlander}: $(\text{x-position})^2$, $(y\text{-position})^2$, $(x\text{-velocity})^2$, $(y\text{-velocity})^2$, $(\text{angle})^2$, $(\text{angular velocity})^2$, and control costs;
\item {\tt hopper}: inverse $x$-velocity, inverse $z$-velocity, inverse $z$-position, inverse torso angle, and control cost;
\item {\tt halfcheetah}: three variants of inverse $x$-velocity, and control cost; and
\item {\tt walker}: inverse $x$-velocity, inverse $z$-position, and control cost.
\end{itemize}

\subsection{Baseline Methods}\label{sec:baselines}
We employ behavior cloning (BC), generative adversarial imitation learning (GAIL) \cite{ho2016generative}, and adversarial inverse reinforcement learning (AIRL) \cite{fu2018learning} as classical imitation baselines. From extrapolative (better-than-demonstrated) imitation approaches, we compare our approach against T-REX \cite{brown2019extrapolating} rather than its more recent extension, D-REX \cite{brown2020better}, since the latter only adds a new method for automatically generating ranked, synthetic demonstrations, while still retaining the core formulation and loss function introduced in T-REX. As a result, we are better able to examine fundamental differences between learning from ranked demonstrations and learning by subdominance minimization. To provide a more comparable baseline, we learn the TREX cost function $C$ as a linear combination of cost features $\mathbf{f}$ and cost function weights $\hat{w}$, rather than as a function mapping from the observation vector $\boldsymbol{\phi}$ to cost $C$ (abbreviated TREX\textsubscript{CF}); this can be thought of as replacing the penultimate layer of T-REX's cost network with a known, predefined state cost representation. We also train an unaltered version of T-REX (abbreviated TREX) on our demonstrations and provide these results for reference.

We implement the policy optimization of MinSubFI using Stable Baselines3 \cite{stable-baselines3}.  Across all experiments, all baseline methods use the same base policy model paired with Stable Baseline3's implementation of the PPO algorithm \cite{schulman2017proximal}. 
The experiments are not based on extensive hyperparameter tuning; rather, all policy networks use nearly the same hyperparameters (Table \ref{tab:hyperparams}).

\begin{table}[htbp]
  \caption{Values of PPO hyperparameters for each environment.}
  \label{tab:hyperparams}
  \centering
  \scalebox{0.7}{
  \begin{tabular}{rrrrrrrr}
    \toprule
    & learning & entropy & mini-  & & & clip & total \\
    Environment & rate & coeff. & batch & horizon & epochs & range & steps\\
    \midrule
    {\tt cartpole}    & $1\mathrm{e}{-4}$ & 0 & $512$ & $2048$ & $10$ & $0.2$ & $2\mathrm{e}6$  \\
    {\tt lunarlander} & $1\mathrm{e}{-4}$ & $1\mathrm{e}{-6}$ & $2048$ & $2048$& $10$& $0.2$ & $2\mathrm{e}6$ \\
    {\tt hopper}      & $9.8\mathrm{e}{-5}$ & $1\mathrm{e}{-2}$& $512$& $2048$& $5$& $0.2$ & $5\mathrm{e}{6}$ \\
    {\tt halfcheetah} & $9.8\mathrm{e}{-5}$ & $1\mathrm{e}{-4}$ & $256$& $2048$& $5$& $0.2$ & $5\mathrm{e}{6}$ \\
    {\tt walker}      & $2\mathrm{e}{-5}$ & $6\mathrm{e}{-4}$ & $32$& $512$& $20$& $0.1$ & $1\mathrm{e}{6}$ \\
    \midrule
  \end{tabular}}
\end{table}

\subsection{Training and Bootstrapping}

Using Algorithm \ref{alg:the_alg} and analytically computed $\alpha$ values in step 4, we initialize our Online MinSubFI training with a policy that is pretrained via Offline MinSubFI (Corollary \ref{eq:offline_theta_update}); we motivate this choice via an ablation study with different policy initializations in the extended version of this paper.
For all of our experiments throughout the paper, we employ a quadratic expansion of the original cost features as the vectorized outer product of the original cost feature vector, ${\bf f}_{\text{expanded}}=\text{vec}({\bf f}\cdot{\bf f}^{\top})$.


\begin{table*}[ht!]
  \caption{Relative $\gamma$-satisficing values of different versions of MinSubFI on the basic cost features (values greater than 1 are formatted in \textbf{bold} and the best of each environment is additionally colored \textcolor{armygreen}{green}).}
  \label{tab:accept}
  \centering
  \scalebox{0.8}{
    \begin{tabular}{
    rrrrrrrrrrr}
    \toprule
     & \multicolumn{5}{c}{Baselines} & \multicolumn{5}{c}{Ours} \\
    \cmidrule(lr){2-6}\cmidrule(lr){7-11}
    Environment & BC &TREX & TREX\textsubscript{CF}& AIRL & GAIL & MinSubFI\textsubscript{OFF}& MinSubFI\textsubscript{ON}& MinSubFI\textsubscript{SNIP}& MinSubFI\textsubscript{SNIP*}& MinSubFI\textsubscript{LCF}\\

    \midrule
    {\tt cartpole} & 
    0.19 & 0.04 & 0.00 & 0.09 & \textbf{2.24}
    & \textbf{2.62} & \textbf{2.64} & \tbest{2.68} & \textbf{2.59} & \textbf{2.04} \\
    {\tt lunarlander} 
    & 0.00 & 0.00&  0.00 & 0.02 & 0.00
    & 0.49 & \textbf{1.03} & \textbf{1.16} & \textbf{1.49} & \tbest{2.24} \\
    {\tt hopper}  
    & 0.00 & 0.00& 0.00& 0.02 & \tbest{6.40} 
    & 0.86 & \textbf{1.63} & \textbf{1.29} & \textbf{1.38} & \textbf{1.46} \\
    {\tt halfcheetah}
    & 0.00 & 0.00 & 0.00& \textbf{1.12} & \tbest{3.99}
    &\textbf{1.93} & \textbf{1.93} & \textbf{1.85} & \textbf{1.77} & \textbf{1.74} \\
    {\tt walker2d} 
    & \tbest{8.73}& 0.00 & 0.00& 0.00 & 0.00 
    & \textbf{2.15}& \textbf{1.44} & \textbf{1.46} & \textbf{1.54} & \textbf{1.86} \\

    \bottomrule
    
  \end{tabular}}
\end{table*}

\begin{table*}[tbh]
  \caption{Mean (and standard deviation) of the true episode returns of the \textbf{held out demonstrations} and trajectories sampled from different imitation learning methods' learned policies for all environments using synthetic demonstrations and for {\tt lunarlander} with human demonstrations (bottom row).}
  \label{tab:true}
  \centering
  \small
  \resizebox{\textwidth}{!}{
    \begin{tabular}{rr@{\hskip.1cm}l@{\hskip0cm}r@{\hskip.1cm}l@{\hskip.2cm}r@{\hskip.1cm}l@{\hskip.2cm}r@{\hskip.1cm}l@{\hskip.2cm}r@{\hskip.1cm}l@{\hskip.2cm}r@{\hskip.1cm}l@{\hskip.2cm}r@{\hskip.1cm}l@{\hskip.2cm}r@{\hskip.1cm}l@{\hskip0.1cm}r@{\hskip.1cm}lr@{\hskip.1cm}l@{\hskip.1cm}r@{\hskip.1cm}ll@{\hskip.1cm}r@{\hskip.1cm}l@{\hskip.1cm}rll@{\hskip.1cm}rl@{\hskip.1cm}r}
    \toprule
    & \multicolumn{2}{c}{Demon-} & \multicolumn{10}{c}{Baselines} & \multicolumn{10}{c}{Ours} &  \\
    \cmidrule(lr){4-13}\cmidrule(lr){14-24}
    Environment & \multicolumn{2}{c}{strations} & 
    \multicolumn{2}{c}{BC} &
    \multicolumn{2}{c}{TREX}&
    \multicolumn{2}{c}{TREX\textsubscript{CF}} & \multicolumn{2}{c}{AIRL} & \multicolumn{2}{c}{GAIL} & \multicolumn{2}{@{\hskip.1cm}c@{\hskip0cm}}{\small MinSubFI\textsubscript{OFF}}& \multicolumn{2}{@{\hskip.1cm}c@{\hskip0cm}}{\small MinSubFI\textsubscript{ON}}& \multicolumn{2}{@{\hskip.1cm}c@{\hskip0cm}}{\small MinSubFI\textsubscript{SNIP}}& \multicolumn{2}{@{\hskip.1cm}c@{\hskip0cm}}{\small MinSubFI\textsubscript{SNIP*}}& \multicolumn{2}{@{\hskip.1cm}c@{\hskip0cm}}{\small MinSubFI\textsubscript{LCF}}  \\
    \midrule
    {\tt cartpole}    & 116 & (74) & 70 & (37)& 199 & (0.1)& 199 & (0.1) & 15 & (4) & \textbf{200 }& \textbf{(0.0)} &
        \textbf{200}  & \textbf{(0.1)} & 198 & (2) & 199 & (0.9) & 197 & (2) & \textbf{200} & \textbf{(0.0)} \\
    {\tt lunarlander} & 113 & (132) & 164 &  (27) & -171 & (3)& {195} & (7) & -416 & (30) & 256 & (9) & 
        \textbf{268}  &\textbf{(0.5)}  & \textbf{268} & \textbf{(0.6)} & \textbf{268} & \textbf{(0.6)} & 266 & (1) & -269 & (153) \\
    {\tt hopper} & 858 & (884) & 671 & (80) & 1335& (15)& \textbf{2657} &\textbf{(28)} & 11 & (4) &601& (30) &
        570 &(33)  & 1470 & (149) & 1070 & (166) & 849 & (125) & 1373 & (305) \\
     {\tt halfcheetah} & 686 & (584) & 1283 & (53) &  1017 & (7)& 1535 &(49) & 768 & (47) &1595&(4) & 
        \textbf{1626} &\textbf{(10)}  & 1591 & (8) & 1591 & (14) & 1576 & (10) & 1463 & (86) \\
    {\tt walker2d}    & 891 & (1141) & 526 & (99) &  20 & (0.0)& 90 &(5) & -3 & (0.1) & 489& (82) & 
        1461 &(449) & 1688 & (479) & 1861 & (481) & 1446 & (402) & \textbf{2592} & \textbf{(182)} \\
        \midrule
            {\tt lunarlander}$_{\text{human}}\!\!\!$ & 173 & (118) & -75 &  (83) & -569 & (114)& {-310} & (86) & -197 & (71) & -254 & (5) & 
        {-25}  &{(21)}  & \textbf{193} & \textbf{(5)} & 6 & (10) & -76 & (11) & -487 & (267) \\
    \bottomrule
  \end{tabular}
}
\end{table*}

We train two snippet-based MinSubFI models: one using fixed snippet lengths and alignments (MinSubFI\textsubscript{SNIP}) and one with alignments that are selected through optimization (MinSubFI\textsubscript{SNIP*}).
We additionally train  an online subdominance minimizer with a \underline{l}earned
\underline{c}ost \underline{f}eature space (MinSubFI\textsubscript{LCF}) of $K'=3$ dimensions via Definition \ref{thrm_repr_rank}. 
We use a multi-layer perceptron network with two hidden layers of width $8$ as our cost feature architecture. 
In contrast with TREX, which employs four different levels of preference (or ranks) to categorize demonstration quality, we consider two preference levels (i.e., \emph{acceptable} and \emph{not acceptable}).

\subsection{Demonstrator Acceptability Analysis} 

 In Table \ref{tab:accept}, we evaluate the rate that the imitator satisfices demonstrations (Definition \ref{gmma_suprhumn}), guaranteeing demonstrator satisfaction, relative to the rate that a randomly chosen demonstration satisfices other demonstrations, $$P(\xi \in \Omega_{\tilde{\xi}})/P(\tilde{\xi'} \in \Omega_{\tilde{\xi}}),$$
 using trajectory-level cost features.
Imitation learning methods designed to minimize a predictive loss (BC) or a learned cost function (TREX) produce trajectories with very different cost features than those of the demonstrations,
leading to small values in this analysis (with a few exceptions, e.g., BC on {\tt walker2d}).
More specifically, aggressively optimizing an estimated cost function using reinforcement learning often focuses too narrowly on minimizing one or a small subset of cost features at the expense of ignoring one or more other features, allowing them to take unacceptable values.
For example, though TREX produces {\tt cartpole} policies keeping the pole upright (near optimally), it does so with much larger amounts of horizontal motion than demonstrations exhibit, making it potentially unacceptable to the demonstrator. GAIL, which employs a discriminator to help make demonstrator and imitator behavior indistinguishable, achieves 
high acceptability rates on some environments.
However, it does so inconsistently, with no relative satisficing on {\tt lunarlander} and {\tt walker2d}. 

In contrast, since MinSubFI minimizes an upper bound on the imitator's satisficing value, it consistently guarantees demonstrator acceptability much more frequently. We also find that online subdominance minimization tends to provide more frequent acceptability guarantees than the offline variant. Additionally, snippet-optimized subdominance minimization frequently provides 
 the large rates of acceptability guarantees in different environments. This is despite the fact that snippet optimization seeks to provide snippet-level demonstrator acceptability, while Table \ref{tab:accept} measures trajectory-level acceptability, indicating its general benefit in guiding policy optimization. 
 
 In addition, though MinSubFI\textsubscript{LCF} learns its own space of cost features, it still provides large rates of guaranteed demonstrator acceptance in the original, provided cost feature space for most environments. This provides some evidence that knowing the demonstrator's cost feature space is unnecessary for providing demonstrator-acceptable behavior, even though it may not be possible to formally guarantee demonstrator acceptance in such settings.

\begin{figure*}[tbh!]
    \centering
    \includegraphics[width=\textwidth]{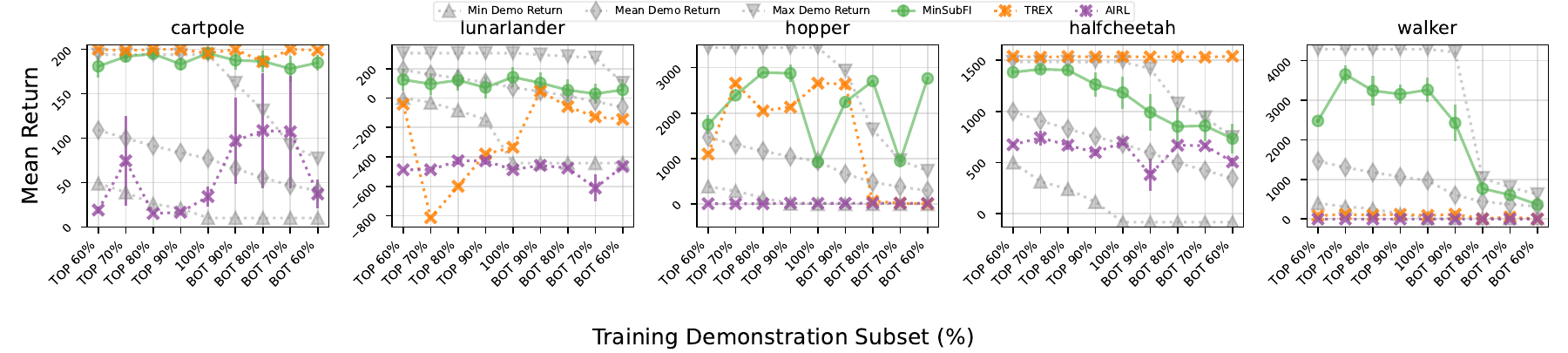}  
\caption{Mean true returns of 100 trajectories rolled out from the learned policies and the minimum, average, and maximum reward of the training set trajectories . Each policy was trained on a subset of demonstrations obtained by removing the best or worst $10\%$, $20\%$, $30\%$, or $40\%$ of the demonstrations. 
Compared to T-REX (orange) and AIRL (purple), the performance of MinSubFI (green) is more robust to increases in the proportion of suboptimal demonstrations in the dataset.}
    \label{fig:demo_quality_vs_performance}
\end{figure*}

\subsection{True Returns Using Full Demonstration Set}
Unlike ``real-world'' imitation learning tasks, the true returns used to construct the demonstrator's policy are known in our experiments.
Though MinSubFI seeks to achieve demonstrator acceptability for all cost functions defined by its cost features (Table \ref{tab:accept}), it should also provide improvements over demonstrations in terms of true return when the true return can be (approximately) defined by the cost features.
We note that having a fixed true return function for all demonstrations is a strong assumption from the perspective of our formulation of satisficing theory, which allows the acceptability set to vary with each demonstration. 
In Table \ref{tab:true}, we evaluate the true returns of the demonstrations and each of the imitation learning methods averaged over three random seeds. 

We find that Behavioral Cloning (BC) and AIRL often underperforms relative to the demonstrations (except for {\tt halfcheetah}), while the relative performance for TREX and GAIL is more mixed.  
We note that GAIL's higher relative satisficing performance in Table \ref{tab:accept} does not translate into higher true returns ((.g., on {\tt hopper}).
In contrast, the various forms of MinSubFI tend to consistently outperform the demonstrations with only a few exceptions (e.g., MinSubFI\textsubscript{OFF} on {\tt hopper}). 
In terms of the numerical true returns, different variants of MinSubFI provide the highest returns except for {\tt hopper}, in which TREX\textsubscript{CF} provides the largest returns.
Interestingly, while TREX benefits from using the cost features as the basis for its cost function estimate, cost function learning provides significant improvements for MinSubFI\textsubscript{LCF} in {\tt walker2d} and {\tt cartpole}.

\subsection{Demonstration Quality and Performance}
Demonstrated behavior is often noisy and suboptimal, making learning from such data 
a desirable capability. In this section, we control the quality of demonstrations used for imitation. We sort all demonstrations by their total (true) return and then choose a subset by retaining the best or worst $90\%$, $80\%$, $70\%$, or $60\%$ of the original set. We use this demonstration subset to train T-REX and Online MinSubFI. The performance is shown in Figure \ref{fig:demo_quality_vs_performance}.

For the simple {\tt cartpole} environment, both TREX and MinSubFI are able to continue outperforming the best demonstrations even when they become worse in quality.
For the remaining environments, except for {\tt halfcheetah} in which TREX performs exceptionally well, MinSubFI tends to provide comparatively better true returns than the baselines as the quality of demonstrations becomes worse.

\subsection{Performance with Human Demonstrations}
\label{sec:lander_human}
Human-provided demonstrations often exhibit multi-modal patterns and irregular noise distributions, making them harder to learn from. We explore the behavior of our method trained on human-provided demonstrations, using the continuous version of Lunarlander and compare it against imitation baselines. The results, averaged over three seeds, are shown in the last row of Table \ref{tab:true}. MinSubFI\textsubscript{ON} clearly outperforms all baselines, which appear unable to learn from human demonstrations in this setting.

\section{Conclusions and Future Work}\label{conclusions}
In this paper, we reframed imitation learning using satisficing theory to 
develop MinSubFI, a policy gradient approach for seeking to make the imitator's behavior acceptable to the demonstrator by directly minimizing policy subdominance---based on entire trajectories or selectively chosen snippets. We present both variants for online and offline learning, and show how offline bootstrapping results in significant simulator sample efficiency. Further, we present a feature presentation learning method using offline subdominance-minimization from demonstrations. Using multiple control and robotics environments, we show that MinSubFI frequently guarantees demonstrator acceptability, while existing imitation learning methods rarely do. 
Further, we show that MinSubFI with learned cost features provides demonstrator acceptability in our hand-specified cost feature space. 
Despite being designed for the more flexible setting in which the acceptability set can change for each demonstration, our experiments show that MinSubFI provides competitive true returns (without explicit assumptions about a static true cost function, as in other imitation learning methods). 
We additionally show that MinSubFI
is more robust to degradation in the quality of demonstrations used for training compared to existing approaches.

There are many interesting directions for future work. 
Learning feature representations without supplemental annotations is a challenging problem that would make our method easier to employ in practice.
Developing more general strategies for  snippet generation could better leverage demonstrations across different tasks or domains.
Exploring other methods for guaranteeing high levels of demonstrator acceptability is also of great interest. 
Finally, conducting experiments in application areas that lack true return functions for evaluation and/or have notions of acceptability that are dynamic and subjective is an important future direction.

\appendix


\section*{Acknowledgments}
This research is supported by NSF Award \#2312955.

\newpage 

\bibliographystyle{named}
\bibliography{bibliography}

\clearpage
\appendix

\section{Optimization of ${\alpha}$: Numerical and Analytical}\label{appendix:alpha_opt}
The $\alpha$ values of the subdominance define the sensitivity of the learned policy to not performing significantly better than demonstrations.
For a given imitator trajectory and set of demonstrations, the $\alpha$ values can be updated numerically via (exponentiated) stochastic gradient optimization (e.g., within Algorithm \ref{alg:the_alg}), as shown in Algorithm \ref{alg:alpha_update}.

\begin{algorithm}[H]
  \caption{Online update of $\alpha$ values}
  
  \label{alg:alpha_update}
  \begin{algorithmic}[1]
        \FOR{{\bf each } $k$}
            \STATE $\alpha_k\!\gets\!\alpha_k \exp\bigl\{-{\eta}_{t}' \sum\limits_{\mathclap{i,\tilde{\xi}_{i,j} \in \tilde{\Xi}_{i,m}^{\text{SV}_k}}} \bigl( f_{k}^{(\xi_{i})}-f_{k}^{(\tilde{\xi}_{i,j})}\bigr)\!+\!\lambda|\tilde{\Xi}|\alpha_k\bigr\}$
        
        \ENDFOR
  \end{algorithmic}
 \end{algorithm}

Alternatively, the optimal $\alpha$ values for can be computed analytically \cite{memarrast2023superhuman}:
\begin{align}
\alpha^*_k = \argmin_{\alpha_k} m \text{ such that: } f_k(\xi) + \lambda \leq \frac{1}{m} \sum_{j=1}^m f_k(\xi^{(j)}), .
\end{align}
where $\alpha_k^{(j)} = \frac{1}{f_k(\xi)-f_k(\tilde{\xi}^{(j)})}$ is the hinge slope that makes demonstration $\tilde{\xi}^{(j)}$ exactly where the subdominance becomes zero.

\section{Policy Gradient Subdominance Minimization}\label{app:minsubpg_thm_proof}
\begin{proof}[Proof of Theorem \ref{minsubpg_theorem}]
The general form of the gradient in the policy gradient update may be written as
\begin{align}\label{policy_grad_general_form}
    &g = \mathbb{E}_{\xi\sim\pi\times\tau}\left[\sum_{(s_t,a_t)\in\xi}G_t\nabla_{\theta}\log\pi_{\theta}(a_t|s_t)\right],
\end{align}
where $G_t$ measures the quality of acting under policy $\pi_\theta$ in state $s_t$ (e.g., 
discounted sum of future costs/rewards $\sum_{t+1}^{T}\gamma^{t}r(S_t)$, state-action value function $Q^{\pi_\theta}(S_t,A_t)$, advantage estimate $A^{\pi_\theta}(S_t,A_t)$, or a measure of expected future returns.

Further, the absolute subdominance of a trajectory can be decomposed over its states according to the equations of Corollary \ref{corollary_decompose}, which we rewrite for notational simplicity as:
\begin{align}
\textup{subdom}_{\boldsymbol{\alpha}}^{[\Sigma]}(\xi, \tilde{\Xi})
    & = 
    \sum_{s_t \in \xi}\textup{subdom}_{\boldsymbol{\alpha}}^{[\Sigma]}(s_t, \tilde{\Xi})\\
    & = \sum_{s_t \in \xi,k}\textup{subdom}_{\boldsymbol{\alpha}}^{[\Sigma],k}(s_t, \tilde{\Xi}),
\end{align}
where $\sum_{s_t \in \xi,k}\textup{subdom}_{\boldsymbol{\alpha}}^{[\Sigma],k}(s_t, \tilde{\Xi})$ is the \emph{contribution} of each state $s_t\in\xi$ towards the \emph{total} subdominance the trajectory $\xi$. It can be computed as:
\begin{align*}
    \textup{subdom}_{\boldsymbol{\alpha}}^{[\Sigma],k}(s_t, \tilde{\Xi}) = \!\!
        \frac{\tilde{C}^{k}}{|\xi|} + 
        \tilde{C}^{k}\alpha_k f_k(s_t) -
        \frac{\alpha_k \tilde{f}_{k,\textup{abs}}^{(j)}}{|\xi||\tilde{\Xi}|},
\end{align*}
where $\tilde{C}^{k}=\frac{|\tilde{\Xi}^{\textup{SV}_k}|}{|\tilde{\Xi}|}$, $\tilde{f}_{k,\textup{abs}}^{(j)}=\sum\limits_{\tilde{\xi}_{j} \in \tilde{\Xi}^{\textup{SV}_k}}\sum\limits_{s'_{t} \in \tilde{\xi}_{j}} f_k(s'_{t})$. 

Assume $G_t$ to be the total trajectory return in Equation \eqref{policy_grad_general_form}
$G_t=\sum_{s_t \in \xi}r(s_t)$,
and assume the subdominance \emph{contribution} of each state to be its \emph{negative} reward,
\begin{align*}
    r(s_t)=-\textup{subdom}_{\boldsymbol{\alpha}}^{[\Sigma]}(s_t, \tilde{\Xi})
\end{align*}
 Then Equation \eqref{policy_grad_general_form} may be rewritten as 
 \begin{align*}
     &g = \mathbb{E}_{\xi\sim\pi\times\tau}\biggl[ \sum_{(s_t,a_t)\in\xi} G_t\nabla_{\theta}\log\pi_{\theta}(a_t|s_t)\biggr]\\
     &=\mathbb{E}_{\xi}\biggl[ \sum_{(s_t,a_t)\in\xi} \sum_{s_t \in \xi}r(s_t)\nabla_{\theta}\log\pi_{\theta}(a_t|s_t)\biggr]\\
     &=\mathbb{E}_{\xi}\biggl[ \sum_{(s_t,a_t)\in\xi} \sum_{s_t \in \xi}-\textup{subdom}_{\boldsymbol{\alpha}}^{[\Sigma]}(s_t, \tilde{\Xi})\nabla_{\theta}\log\pi_{\theta}(a_t|s_t)\biggr]\\
     &=\mathbb{E}_{\xi}\biggl[\sum_{(s_t,a_t)\in\xi}-\textup{subdom}_{\boldsymbol{\alpha}}^{[\Sigma]}(\xi, \tilde{\Xi})\nabla_{\theta}\log\pi_{\theta}(a_t|s_t)\biggr]\\
     &=\mathbb{E}_{\xi}\biggl[-\textup{subdom}_{\boldsymbol{\alpha}}^{[\Sigma]}(\xi, \tilde{\Xi})\sum_{(s_t,a_t)\in\xi}\nabla_{\theta}\log\pi_{\theta}(a_t|s_t)\biggr],
 \end{align*}
where $\xi\sim\pi\times\tau$, and the final expression follows from the fact that the \emph{total} subdominance $\textup{subdom}_{\boldsymbol{\alpha}}^{[\Sigma]}(\xi, \tilde{\Xi})$ of trajectory $\xi$ is the same for each state $s_t\in\xi$. Substituting this gradient expression $g$ into the policy gradient update rule over multiple tasks $i$, we get the subdominance policy gradient update rule in Theorem \ref{minsubpg_theorem}. Alternatively, decomposing the relative subdominance according to 
its definition in Corollary \ref{corollary_decompose} gives us the equivalent result for the relative definition of subdominance.                                                                 
\end{proof}

\section{Per-State Cost Decomposition}\label{appendix:per_state_decomp}
The support vectors in subdominance minimization are the features of the demonstrations that the imitator trajectory fails to sufficiently dominate by a margin. 
When those support vectors are known, the trajectory subdominance can be decomposed over the individual states of a trajectory as follows.
\begin{proof}[Proof of Corollary \ref{corollary_decompose} (Absolute)]
The absolute, sum-aggregated subdominance is defined as:
{\small
\begin{align}
        &\textup{subdom}^{\Sigma}_{\boldsymbol{\alpha}}(\xi, \tilde{\Xi})
        =\sum_k \text{subdom}^k_{\alpha_k}(\xi,\tilde{\Xi})\notag\\
        &=\sum_k \frac{1}{|\tilde{\Xi}|}\sum_{\tilde{\xi} \in \tilde{\Xi}}\left[\alpha_k(f_k(\xi)-f_k(\tilde{\xi}))+1\right]_{+}\notag\\
        &=\sum_k \frac{1}{|\tilde{\Xi}|}\sum_{\tilde{\xi} \in \tilde{\Xi}^{\textup{SV}_k}(\xi)}\left(\alpha_k(f_k(\xi)-f_k(\tilde{\xi}))+1\right)
        \label{eq:decompProof}\\
        &=\sum_k \frac{\alpha_k}{|\tilde{\Xi}|}\sum_{{\tilde{\xi} \in \tilde{\Xi}^{\textup{SV}_k}(\xi)}}\left( \frac{1}{\alpha_k} + \sum_{\mathclap{s_t \in \xi}} f_k(s_t) - \sum_{\mathclap{s'_{t} \in \tilde{\xi}}} f_k(s'_{t}) \right)\notag
        \end{align}
        \begin{align}
        &= \sum_{s_t \in \xi,k} \frac{\alpha_k}{|\tilde{\Xi}|}\sum_{\tilde{\xi} \in \tilde{\Xi}^{\textup{SV}_k}(\xi)}\left(\frac{1}{\alpha_k|\xi|} + f_k(s_t)- \frac{\sum\limits_{\mathclap{s'_{t} \in \tilde{\xi}}}f_k(s'_{t})}{|\xi|}\right) \notag\\
        &= \sum_{\mathclap{s_t \in \xi,k}} \frac{\alpha_k|\tilde{\Xi}^{\textup{SV}_k}(\xi)|}{|\tilde{\Xi}|}\!\left(
        \frac{1}{\alpha_k|\xi|}\!+\! 
        f_k(s_t)\!-\!
        \frac{\sum\limits_{\tilde{\xi} \in \tilde{\Xi}^{\textup{SV}_k}(\xi)}\sum\limits_{s'_{t} \in \tilde{\xi}}f_k(s'_{t})}{|\xi||\tilde{\Xi}^{\textup{SV}_k}(\xi)|}\right) \notag\\
        &=\sum_{s_t \in \xi,k} 
        \frac{\tilde{C}_{\xi,\tilde{\Xi}}^{k}}{|\xi|} + 
        \tilde{C}_{\xi,\tilde{\Xi}}^{k}\alpha_k f_k(s_t) -
        \frac{\alpha_k \tilde{f}_{k,\xi,\tilde{\Xi}}^{\textup{abs}}}{|\xi||\tilde{\Xi}|}, \notag
\end{align}}%
\noindent
where $\tilde{C}_{\xi,\tilde{\Xi}}^{k}=\frac{|\tilde{\Xi}^{\textup{SV}_k}(\xi)|}{|\tilde{\Xi}|}$, and $\tilde{f}_{k,\xi,\tilde{\Xi}}^{\textup{abs}}=\sum\limits_{\tilde{\xi} \in \tilde{\Xi}^{\textup{SV}_k}(\xi)}\sum\limits_{s'_{t} \in \tilde{\xi}} f_k(s'_{t})$.

\end{proof}
Subdominance using the maximum over each feature to aggregate per-feature subdominances takes a similar form. 
It becomes identical to the expression starting from Equation \eqref{eq:decompProof} with the key difference that each demonstration can only be a support vector for a single feature dimension $k$ for max-aggregated subdominance (and conversely, each demonstration can be a support vector for multiple feature dimensions $k$ for sum-aggregated subdominance).

\begin{proof}[Proof of Corollary \ref{corollary_decompose} (Relative)]
The relative, sum-aggregated subdominance is defined as:
{\small
\begin{align}
        &\textup{relsubdom}^{\Sigma}_{\boldsymbol{\alpha}}(\xi, \tilde{\Xi})
        =\sum_k \text{relsubdom}^k_{\alpha_k}(\xi,\tilde{\Xi}_i) \notag\\
        &=\sum_k \frac{1}{|\tilde{\Xi}|}\sum_{\tilde{\xi} \in \tilde{\Xi}}\left[\alpha_k\left(\frac{f_k(\xi)}{f_k(\tilde{\xi})} - 1\right)+1\right]_{+} \notag\\
        &=\sum_k \frac{1}{|\tilde{\Xi}|}\sum_{\tilde{\xi} \in \tilde{\Xi}^{\textup{SV}_k}(\xi)}\left(\alpha_k\left(\frac{f_k(\xi)}{f_k(\tilde{\xi})} - 1\right)+1\right) \label{eq:decomp2}\\
        &=\sum_k \frac{1}{|\tilde{\Xi}|}\sum_{\tilde{\xi} \in \tilde{\Xi}^{\textup{SV}_k}(\xi)}\left( (\beta_k - \alpha_k) + \alpha_k \frac{\sum_{s_t \in \xi} f_k(s_t)}{\sum_{s'_{t} \in \tilde{\xi}} f_k(s'_{t})}  \right) \notag\\
        &= \sum_{s_t \in \xi,k}\!\Biggl( \frac{(1 - \alpha_k)|\tilde{\Xi}^{\textup{SV}_k}(\xi)|}{|\xi||\tilde{\Xi}|}\!+\!\frac{\alpha_k f_k(s_t)}{|\tilde{\Xi}|}\!\sum\limits_{\tilde{\xi} \in \tilde{\Xi}^{\textup{SV}_k}(\xi)}\frac{1}{\sum\limits_{\mathclap{s'_{t} \in \tilde{\xi}}} f_k(s'_{t})} \Biggr) \notag\\
        &= \sum_{s_t \in \xi,k} 
        \frac{\tilde{C}_{\xi,\tilde{\Xi}}^{k}(1 - \alpha_k)}{|\xi|} + \frac{\alpha_k f_k(s_t)\tilde{f}_{k,\xi,\tilde{\Xi}}^{\textup{rel}}}{|\tilde{\Xi}|},  \notag
\end{align}}%
\noindent
where $\tilde{C}_{\xi,\tilde{\Xi}}^{k}=\frac{|\tilde{\Xi}^{\textup{SV}_k}|}{|\tilde{\Xi}_i|}$, and $\tilde{f}_{k,\xi,\tilde{\Xi}}^{\textup{rel}}=\sum\limits_{\tilde{\xi} \in \tilde{\Xi}^{\textup{SV}_k}(\xi)} \left( \sum\limits_{s'_{t} \in \tilde{\xi}} f_k(s'_{t}) \right)^{-1}$.
\end{proof}

\section{Generalization Bound}\label{app:generalization_bound}

Our generalization bound relies on the absence of distinct local optima of the objective function. Formally, this is provided by the property of quasiconvexity, which guarantees that regions achieving a particular level of subdominance (or lower) are convex. 

\begin{lemma}
\label{lem:quasi}
When the set of features $\mathcal{F}$ realizable by the class of policies ($\mathcal{F} : {\bf f}(\xi) \in \mathcal{F}, \forall \xi \in \Pi$) is convex, the subdominance of realizable features is a quasiconvex function.
\begin{proof}
As a function of the realized features ($f_k$) of the imitator,
$\min_{\alpha_k} \text{subdom}^k_{\alpha_k,1}(f_k, \tilde{\Xi})$ is monotonic (increasing). 
Thus, $\min_{\alpha} \text{subdom}_{\alpha, {\bf 1}}({\bf f}, \tilde{\Xi})$ is a quasiconvex function of ${\bf f}$ for sum- or max-aggregated subdominance.
The intersection of any sublevel set of $\min_{\alpha} \text{subdom}_{\alpha, {\bf 1}}({\bf f}, \tilde{\Xi})$ with $\mathcal{F}$ is also convex. Therefore, 
$\min_{\bf f \in \mathcal{F}} \text{subdom}_{\alpha}({\bf f}, \tilde{\Xi})$ is a quasiconvex minimization problem.
\end{proof}
\end{lemma}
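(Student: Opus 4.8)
The plan is to prove quasiconvexity through its defining characterization: a function is quasiconvex exactly when every sublevel set is convex. I would therefore fix an arbitrary level $c$ and aim to show that $\{{\bf f} : \min_{\boldsymbol{\alpha} \succeq 0}\textup{subdom}_{\boldsymbol{\alpha}}({\bf f}, \tilde{\Xi}) \le c\}$ is convex. Since $\mathcal{F}$ is convex by hypothesis and intersections of convex sets are convex, it is enough to establish convexity of this sublevel set over all of $\mathbb{R}^{K}_{\ge 0}$ and then intersect with $\mathcal{F}$ at the end---this cleanly separates the analytic content (the shape of the sublevel sets) from the structural assumption on the policy class.

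The reliable core of the argument is the single-feature reduction. Define $h_k(f_k) := \min_{\alpha_k \ge 0}\textup{subdom}^{k}_{\alpha_k}(f_k, \tilde{\Xi})$. For each fixed $\alpha_k \ge 0$ every hinge term $[\alpha_k(f_k - f_k(\tilde{\xi})) + 1]_{+}$ is non-decreasing in $f_k$, so the feature-$k$ subdominance is non-decreasing in $f_k$; because the pointwise infimum of a family of non-decreasing functions is again non-decreasing, $h_k$ is monotonically non-decreasing. I would then use that the slopes $\alpha_k$ decouple across feature dimensions so that the minimization over $\boldsymbol{\alpha}$ can be carried inside the aggregation, expressing the minimized objective through the one-dimensional monotone profiles $h_k$.

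The hard part will be converting this coordinatewise monotonicity into convexity of the sublevel sets of the aggregated objective. For max-aggregation I would try to show the sublevel set factorizes into an intersection of coordinatewise down-sets $\bigcap_k \{f_k : h_k(f_k) \le c\}$---each a half-line by monotonicity---yielding an axis-aligned box, hence convex; the subtlety here is the averaging over demonstrations, which couples the features inside each max and must be shown not to obstruct the factorization. For sum-aggregation the difficulty is sharper: a separable sum $\sum_k h_k(f_k)$ of merely monotone profiles need not have convex sublevel sets, so I expect this to be the main obstacle. To close it I would strengthen the single-feature step to argue that each $h_k$ is actually convex on the range of realizable features (making $\sum_k h_k$ convex, and therefore quasiconvex), leveraging that $\textup{subdom}^{k}_{\alpha_k}(f_k, \tilde{\Xi})$ is convex in $f_k$ for every fixed $\alpha_k$ together with the closed form of the optimal $\alpha_k^{*}$; because partial minimization over the bilinearly-coupled $\alpha_k$ does not preserve convexity automatically, this verification---possibly restricted to the non-degenerate regime noted after Definition \ref{minsubpg_obj_def}---is where I would concentrate the effort. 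Once the unrestricted sublevel set is shown convex, intersecting with the convex set $\mathcal{F}$ completes the proof for both aggregations.
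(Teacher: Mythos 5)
Your proposal follows the same skeleton as the paper's proof---monotonicity of each per-feature minimized subdominance $h_k(f_k) := \min_{\alpha_k \ge 0} \textup{subdom}^k_{\alpha_k}(f_k,\tilde{\Xi})$, passage from coordinatewise monotonicity to quasiconvexity of the aggregate, and finally intersection of sublevel sets with the convex set $\mathcal{F}$---and your diagnosis of where the difficulty sits is exactly right: the paper bridges the middle step with a bare ``Thus,'' even though a sum of coordinatewise monotone profiles need not be quasiconvex. The genuine gap is that the repair you propose for the sum case cannot succeed: $h_k$ is provably \emph{not} convex, globally or on the sub-mean regime. Since $\alpha_k = 0$ gives every hinge the value $1$, we have $h_k \le 1$ everywhere; by Jensen's inequality applied to the hinge, $h_k(f_k) = 1$ for every $f_k \ge \bar{f}_k$ (the mean demonstration feature); and $h_k(f_k) = 0$ for every $f_k$ below the minimum demonstration feature (send $\alpha_k \to \infty$). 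A non-constant function that equals its maximum on an upper half-line is never convex: taking $x_0$ below the minimum demonstration value and $z = 2\bar{f}_k - x_0$, convexity would force $1 = h_k(\bar{f}_k) \le \tfrac{1}{2}\left(h_k(x_0) + h_k(z)\right) = \tfrac{1}{2}$. Restricting to $f_k \le \bar{f}_k$ does not help either: $h_k$ jumps upward at the minimum demonstration value (with a single demonstration it is a $0/1$ step function), which already violates convexity there.

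The obstruction is in fact fatal to any argument that keeps the minimization over $\boldsymbol{\alpha}$ inside the sum-aggregation, not just to your version of it. Take two features and two demonstrations with feature vectors $(1,1)$ and $(2,2)$; then each coordinate profile is $h(f) = 0$ for $f < 1$, $h(f) = \tfrac{1}{2(2-f)}$ for $1 \le f \le \tfrac{3}{2}$, and $h(f) = 1$ for $f \ge \tfrac{3}{2}$. The realizable points $(1-\epsilon, \tfrac{3}{2})$ and $(\tfrac{3}{2}, 1-\epsilon)$ both have aggregate value $1$, while their midpoint, approximately $(\tfrac{5}{4},\tfrac{5}{4})$, has value approximately $\tfrac{4}{3} > 1$; the level-$1$ sublevel set is not convex even after intersecting with a convex $\mathcal{F}$ containing these points. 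So the sum-aggregated claim holds only under a different reading, e.g.\ subdominance at \emph{fixed} $\boldsymbol{\alpha}$, which is an average of hinges of affine functions of ${\bf f}$ and hence convex (so quasiconvex) with no monotonicity argument needed---this is arguably what the paper's final sentence, which writes $\textup{subdom}_{\alpha}$ rather than $\min_{\alpha}\textup{subdom}_{\alpha}$, actually asserts. What does survive of your plan is the max-aggregation box argument against a single demonstration (sublevel sets are intersections of coordinatewise half-lines); the demonstration-averaging coupling you flag in the max case is a real, unresolved step, for essentially the same reason the sum case breaks.
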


\begin{proof}[Proof of Theorem \ref{minsubpg_theorem_1}]
The generalization guarantee is based on leave-one-out cross validation error, which is an almost-unbiased estimate of generalization error under IID assumptions \cite{vapnik2000bounds}. 
Removing non-support vectors does not change global optima of subdominance minimization when no distinct local optima exist, which is the case for this quasiconvex optimization problem.
\end{proof}

\section{Implementation Details}
\subsection{Environments}
The environments used for our experiments are famous games re-implemented by OpenAI's gym \cite{brockman2016gym}, providing the tools and interface for interacting with reinforcement learning algorithms. We present the specifications and goals of the environments considered in this work. The available environments are separated into two categories: classic control (Cartpole, Lunar Lander) and MuJoCo environments (Hopper, Walker, HalfCheetah).
Observations in classic control environments are 1D state vectors.

\subsubsection{cartpole (\texttt{CartPole-v0})} The task is to keep a rotating pole, attached to a moving cart, vertical for as long as possible under a gravity model. The player can control the angle by moving the cart either left or right, each movement affecting the angular velocity of the pole. An episode terminates when the pole angle $\theta$ exceeds $\pm12^\circ$ (from the vertical y-axis) or, when the cart position $x$ exceeds $\pm2.4$. Maximum true return is 200.

\subsubsection{lunarlander (\texttt{LunarLander-v2})} The task is to land a shuttle that operates under a gravitational model, on the surface of the moon. An initial force is applied to the lander, providing with a starting velocity and angle; the player must then balance the shuttle and land it at the center of the screen, in a location delimited by two yellow flags. The player controls the shuttle by engaging one vertical and two lateral engines; the main vertical engine displaces the lander while the lateral ones pitch the lander.
In the MinSubFI implementation, we modified the lander to have fixed starting parameters (starting force and moon layout) which then characterize a single task; the maximum true reward in this case is approximately 310.

\subsubsection{hopper (\texttt{Hopper-v3})} The task is to control various joints of a hopping robot (restricted to the vertical $xz$-plane) to ``hop" and make forward progress. Reward at each timestep is a function of the forward velocity of the robot and its ``health" (determined by the physics engine based on the joint angles of the robot). 

\subsubsection{halfcheetah (\texttt{HalfCheetah-v3})} The task is to control various joints of a bipedal robot (restricted to the vertical $xz$-plane) to "run" and make forward progress. Reward at each timestep is a function of the forward velocity of the robot and its "health" (determined by the physics engine based on the joint angles of the robot).

\subsubsection{walker (\texttt{Walker2d-v3})} Adds an additional leg to the hopper environment so that the task is to ``walk" forward rather than ``hop".

\begin{table*}[h!]
  \caption{Description of cost features for each environment. $\sigma(x)=(1+\exp{(x)})^{-1}$ is the sigmoid function used to scale features to $[-1,+1].$}
  \label{tab:exp_cost_feats}
  \centering
  \begin{tabular}{cccl}
    \toprule
    Environment             & $\#$ Cost Features     & Cost Feature &  Description\\
    \midrule
    {\tt cartpole}    & 4  & $x^2$       & $(\text{cart position})^2$\\
                            &    & $v^2$       & $(\text{cart velocity})^2$\\
                            &    & $\theta^2$  & $(\text{pole angle})^2$\\
                            &    & $\omega^2$  & $(\text{pole angular velocity})^2$\\
    \midrule
    {\tt lunarlander} & 6  & $x^2$       & $(\text{lander x-position})^2$\\
                            &    & $y^2$       & $(\text{lander }y\text{-position})^2$\\
                            &    & $v_x^2$     & $(\text{lander }x\text{-velocity})^2$\\
                            &    & $v_y^2$     & $(\text{lander }y\text{-velocity})^2$\\
                            &    & $\theta^2$  & $(\text{lander angle})^2$\\
                            &    & $\omega^2$  & $(\text{lander angular velocity})^2$\\
                            & 3  & $\lVert a_t \rVert_{2}^{2}$ & control cost\\
    \midrule
    {\tt hopper}        & 2  & $-\sigma(v^{top}_x)+1$    & cost inversely proportional to $x$-velocity\\
                                   &    & $-\sigma(v^{top}_z)+1$ & cost inversely proportional to $z$-velocity\\
                        & 1  & $1-tanh(z)$    & cost inversely proportional to $z$-position\\
                        & 1  & $-\sigma(\theta)+1$    & cost inversely proportional to torso-angle\\
                        & 1  & $\sum\lVert a_t \rVert_{2}^{2}$ & control cost\\
    \midrule
    {\tt halfcheetah}   & 1  & $-\sigma(v^{top}_x)+1$    & cost inversely proportional to $x$-velocity\\
                        & 1  & $-v^{top}_x+10$           & cost inversely proportional to $x$-velocity\\
                        & 1  & $-[v^{top}_x]_{+}+10$   & cost inversely proportional to $x$-velocity\\
                        & 1  & $\sum\lVert a_t \rVert_{2}^{2}$ & control cost\\
    \midrule
    {\tt walker}   & 2  & $-\sigma(v^{top}_x)+1$    & cost inversely proportional to $x$-velocity\\
             &    & $-\sigma(-z)+1$    & cost inversely proportional to $z$-position\\
             & 1  & $\sum\lVert a_t \rVert_{2}^{2}$ & control cost\\
    \bottomrule
  \end{tabular}
\end{table*}

\subsection{Computing Cost Features}\label{app:comp_cost_feats}
For all environments employed in our experiments, we compute trajectory cost features $\textbf{f}:\Xi\rightarrow\mathbb{R}^{K}_{\geq0}$ that characterize the trajectory. Trajectory cost features are additive over the states of the trajectory, which are in turn characterized by state cost features $\textbf{f}:\mathcal{S}\rightarrow\mathbb{R}^{K}_{\geq0}$ or state-action cost features $\textbf{f}:\mathcal{S}\times \mathcal{A}\rightarrow\mathbb{R}^{K}_{\geq0}$.

The specific cost features employed differ between environments, but are chosen such that they may be readily computed from each environment's respective observation and/or action vector.

For example, in case of cartpole and lunarlander environments, for a given state $s_t$, the cost feature vector may be computed as 
\begin{align*}
    {\bf f}(s_t)=\{\phi_k(s_t)^2\}_{k=1}^K,
\end{align*}

where $\boldsymbol{\phi}(s):\mathcal{S}\rightarrow\mathbb{R}^{D}$ is simply the $D$-dimensional observation vector returned by the respective environments. Alternatively, $\boldsymbol{\phi}$ can be chosen to also be a function of the actions $a_t$,  $\boldsymbol{\phi}:\mathcal{S}\times\mathcal{A}\rightarrow\mathbb{R}^{D}$; this is useful in integrating control costs in the subdominance minimization problem. The cost feature set can be expanded to include any linear or non-linear, monotonic transformations of the entire cost feature vector ${\bf f}$ or a subset of its components ${\bf f}_k$. We can expand this cost feature set by computing the outer product of the original cost feature vector, ${\bf f}_{\text{expanded}}={\bf f}\cdot{\bf f}^{\top}$.

In essence, cost features are easily characterizable properties of each environment which, when minimized over a trajectory, allow an agent to successfully complete a task. Note, however, that these features are chosen carefully so as to not leak the true cost signal for an environment. For example, for the cartpole problem, the cost features comprise the pole angle $\theta^2$, pole angular velocity $\omega^2$, the cart position $x^2$, and the cart velocity $v^2$. While these cost features are pertinent to task-completion, they are unrelated to the true reward signal for the cartpole environment i.e., the number of timesteps elpased before the pole tips over.
The number of cost features defined is environment-specific; the complete list of cost features defined for each environment is provided in Table \ref{tab:exp_cost_feats}.

\begin{figure*}[thb]
    \centering
    \includegraphics[scale=0.50]{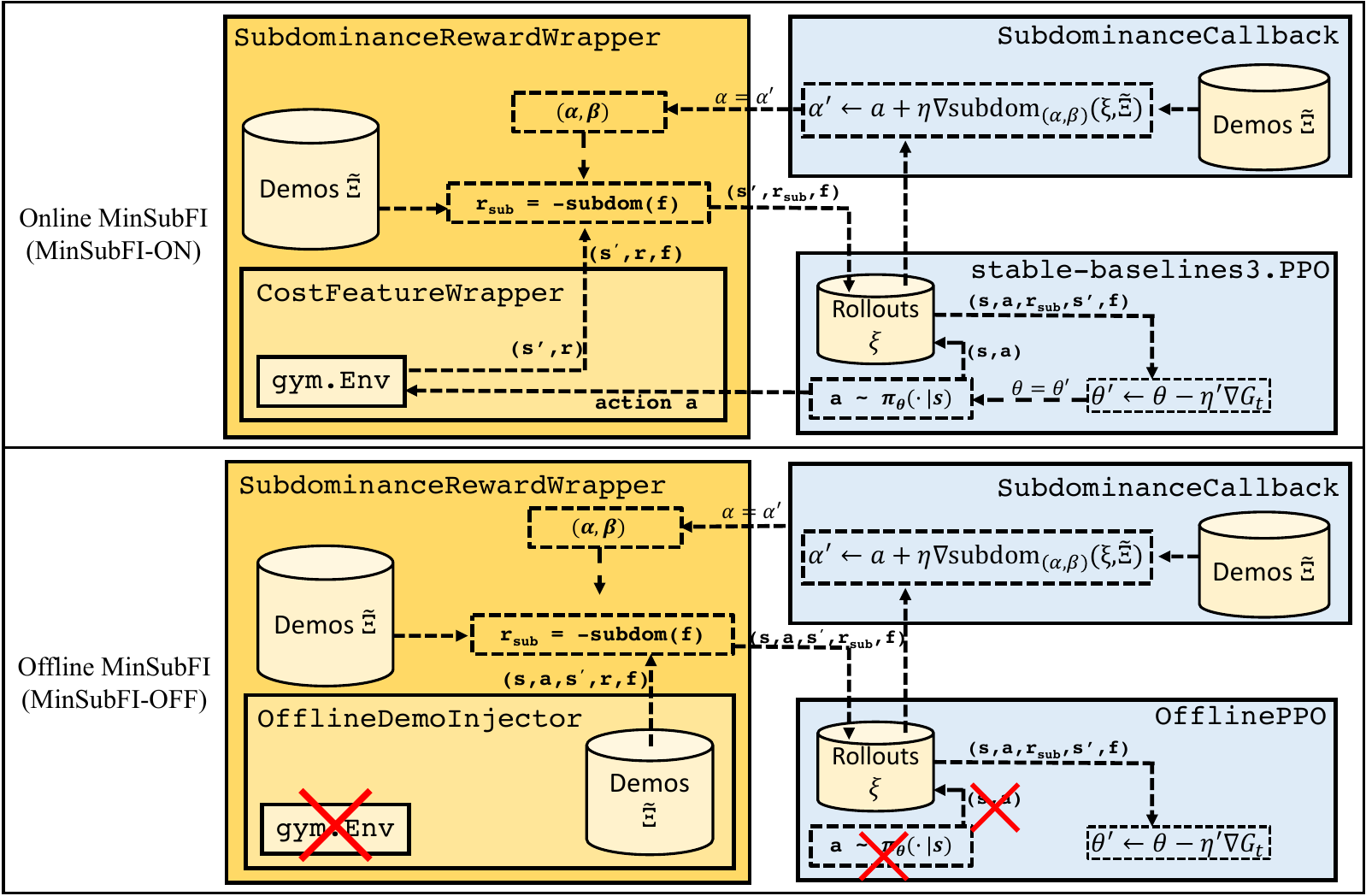}
    \caption{Implemented architecture of Online MinSubFI (top) and Offline MinSubFI (bottom) using $\texttt{gym}$ and $\texttt{stable-baselines3}$. The primary functionality of cost feature and subdominance computation is tackled via two environment wrappers $\texttt{CostFeatureWrapper}$ and $\texttt{SubdominanceRewardWrapper}$. The former computes cost features from observations and the latter computes subdominance relative to demonstrations using cost features. $\texttt{SubdominanceCallback}$ is called periodically to update $\alpha$. Offline MinSubFI uses the $\texttt{OfflineDemoInjector}$ wrapper around an environment to pass $(s,a,r,s',f)$ tuples from demonstrations as rollout data instead, and there is no action returned from the policy to the environment.} 
    \label{fig:sipgo_architecture}
\end{figure*}

\subsection{Trajectory Padding}
It follows from the definition of subdominance (Eq. \eqref{eq:subdominance}) that minimizing $f_k(\xi)$ naturally minimizes subdominance. Based on the specific definition of cost features employed, this sometimes results in degenerate policies. This degenerate behavior most commonly manifests as the trained policy learning to terminate episodes early to achieve lower subdominance via encountering fewer states in the trajectory. This phenomenon is best illustrated using the following example with a single, simple cost feature. 

\subsubsection{Example} Consider a problem setting where we employ a single cost feature $f$. An agent incurs cost features $f(s)=0$ upon reaching the terminal state $s_{\text{success}}$ and $f(s)=10$ in all other states (including $s_{\text{fail}}$.)
Now, consider three trajectories for this task -- a human demonstration ${\tilde \xi} = \{s_1, s_2, s_3, s_4, s_{\text{success}}\}$, and two trajectories $\xi_1 = \{s_1, s_2, s_3, s_4, s_5, s_6, s_{\text{success}}\}$ and $\xi_2 = \{s_1, s_2, s_{\text{fail}}\}$, sampled from policies $\pi_1$ and $\pi_2$ respectively. In choosing between the candidate policies $\pi_1$ and $\pi_2$, an agent opts for $\pi_2$, since, given any $\alpha$, $\pi_2$ results in lower subdominance despite not completing the task successfully. 
\begin{align*}
     f({\tilde \xi})=\sum_{s_t \in {\tilde \xi}}f(s_t) &= (4 \times 10) + 0 = 40\\
     f( \xi_1)=\sum_{s_t \in \xi_1}f(s_t) &= (6 \times 10) + 0 = 60\\
     f( \xi_2)=\sum_{s_t \in \xi_2}f(s_t) &= (3 \times 10) = 30\\
    \implies \text{[rel]subdom}^{[\Sigma]}_{\alpha,\beta} (\xi_1, \tilde{\xi}) &>       \text{[rel]subdom}^{[\Sigma]}_{\alpha,\beta} (\xi_1, \tilde{\xi}) \\
     \implies \xi_1 &\prec \xi_2.
\end{align*}

This toy examples gives us a peek into the source of this degeneracy. This phenomenon is very similar in nature to 'reward gaming' often encountered in other reinforcement learning settings \cite{armstrong2021pitfalls}. In our problem setting, this typically results from misalignment between the defined cost features and task objective, and is encountered experimentally when training is initialized from a random policy in such cases. For environments with such misaligned cost features and when starting subdominance minimization from a random policy, we employ the trajectory padding scheme described next.

\subsubsection{Padding Scheme} For an environment with misaligned cost features ${\bf f}^{(\text{mis})}\in\mathbb{R}^{K}_{\geq0}$, we fix a time horizon $h<H$ where $H$ is number of steps deemed sufficient to complete the task objective for that environment. The cost features of any short trajectory $\xi=({\bf f}^{(\text{mis})}_1,\dots,{\bf f}^{(\text{mis})}_T)$ where $T<h$ is padded with a fixed padding cost vector ${\bf f}_{\text{pad}}\in\mathbb{R}^{K}_{\geq0}$ up to the horizon $h$. The padded/augmented trajectory $\xi'$ then becomes:
\begin{align*}
    \xi'=({\bf f}^{(\text{mis})}_1,\dots,{\bf f}^{(\text{mis})}_T,{\bf f}^{(\text{pad})}_{T+1},\dots,{\bf f}^{(\text{pad})}_{h}).
\end{align*}
Intuitively, this enables a random policy to avoid degenerate solutions by augmenting the cost of such solutions.

\begin{table*}[tbh!]
  \caption{Hardware used for experiments.}
  \label{tab:hardware}
  \centering
  \small
  \begin{tabular}{ccccc}
    \toprule
    Machine Tag & OS & GPU (VRAM) & CPU & Memory \\
    \midrule
    Tabletop PC & Ubuntu 20.04  & GeForce RTX 3080 (10GB)         & AMD Ryzen 5 5600X            & 64 GB \\
    Lab Server  & Ubuntu 20.04 &  2 x  GeForce GTX 1080 Ti (12GB) & Intel Xeon E5-2697 & 180 GB \\
    Shared Cluster  & Ubuntu 20.04 &  2 x Tesla V100 (32 GB)         &  Intel Xeon Silver 4114 & 380 GB \\
    \midrule
    \bottomrule
  \end{tabular}
\end{table*}

\subsection{Reinforcement Learning}
For our experiments, we utilize a modified Proximal Policy Optimization approach for our policy gradient updates. We build on top of the implementation, provided by $\texttt{stable-baselines3}$ (SB3) \cite{stable-baselines3}; a library aimed towards offering robust implementations of important RL algorithms. We build our core subdominance minimization functionalities via wrapper classes for $\texttt{gym}$ environments and callback classes for $\texttt{stable-baselines3}$ model. The MinSubFI architectures for online and offline training are shown in Figure \ref{fig:sipgo_architecture}. Specifically, for online MinSubFI the $\texttt{CostFeatureWrapper}$ computes cost features $\texttt{f}$ for observation $\texttt{s}$ received from the environment. The $\texttt{SubdominanceRewardWrapper}$ contains the demonstrator's cost features which it uses to compute the subdominance; environment reward $\texttt{r}$ is replaced with negative subdominance $\texttt{r}_{\text{sub}}$ and returned to the $\texttt{PPO}$ agent. We find that equivalently returning the \emph{total} negative subdominance as a sparse cost in the terminal state of the rollout (i.e., avoiding the per-step cost decomposition from Corollary \ref{corollary_decompose}) works equally well in practice. The subdominance slopes $\alpha$  are updated via a periodic call to $\texttt{SubdominanceCallback}$. For offline MinSubFI, we create a dummy environment wrapper $\texttt{OfflineDemoInjector}$ which returns $(s,a,r)$ tuples sequentially from demonstrations {\emph concealed} as rollouts. We build $\texttt{OfflinePPO}$ and modify it's rollout collection to {\emph not} sample the policy, and instead treat the demonstrator's action as the one taken. Finally, the offline MinSubFI architecture in Figure \ref{fig:sipgo_architecture} shows three separate demonstration buffers only for sake of clarity.

\subsection{Snippet Subdominance Optimization}\label{appendix:snip_implementation}

When sampling imitator trajectories from $s_{t_s}$ (Line 5 in Algorithm \ref{alg:the_alg_snip_opt}), we follow the policy $\pi_{\boldsymbol{\theta}}(\cdot|s_{t_s})$ for a fixed $T$ steps (rather than until episode termination). For different environments, we choose $T$ to be between $10$-$25\%$ of the maximum trajectory length for the environment. The demonstration trajectory is similarly chosen to be $T$ steps following $s_{t_s}$. We then divide the $T$-step imitator and demonstrator trajectories in $N$ equal, non-overlapping snippets, each $T/N$ steps long. We then compute the subdominances of all $N^2$ pairwise combinations of imitator and demonstrator snippets and consider the imitator snippet with the smallest subdominance for each demonstrator snippet. Note all successive snippets considered from a trajectory all start at initial state $s_{t_s}$ ($t=0$) and end at timesteps $t=T/N$, $2T/N$, and so on. Fixing the imitator and demonstrator's trajectories to be $T$ steps long, and divided into $N$ snippets ensures that snippets from these trajectories are of comparable length.

\subsection{Hardware Details 
}
\label{sec:hyperparams}
Details of the hardware employed are provided in Table \ref{tab:hardware}. Training the online version of MinSubFI end-to-end requires approximately 2 hours for $9e6$ environment interactions, on lunarlander, utilizing approximately 15\% of the GPU's memory; measured on a tabletop computer equipped with an NVIDIA GeForce RTX 3080 GPU and Ryzen 5600X CPU. The other two more powerful machines were used for concurrent experimentation. 

\begin{figure}[ht]
    \centering
    \includegraphics[width=\linewidth]{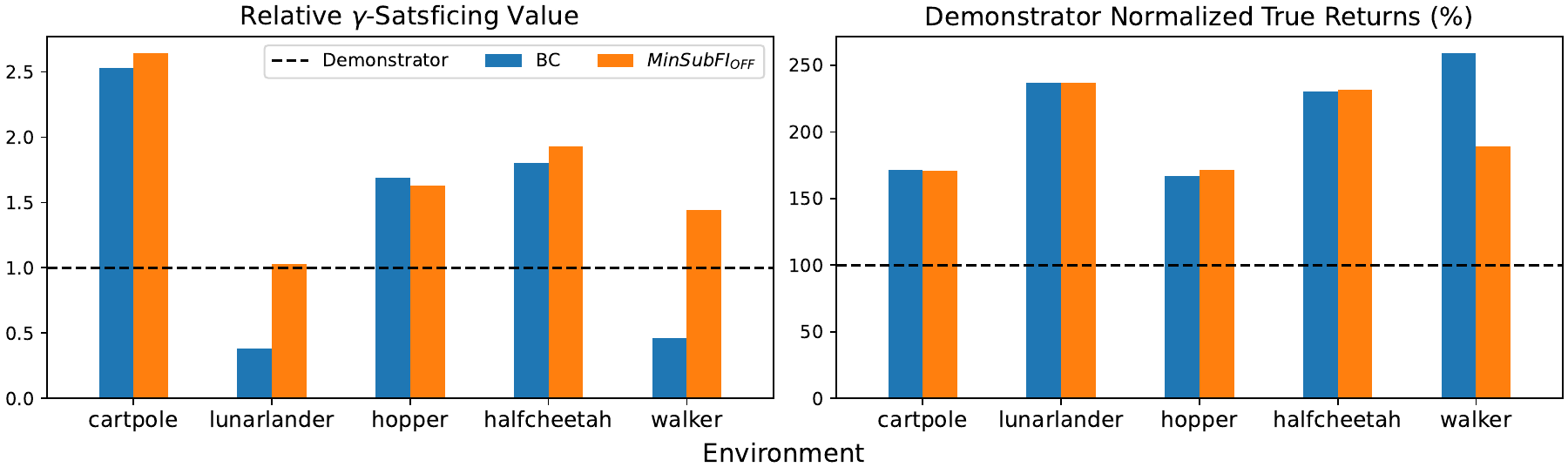}
    \caption{Impact of behavior cloning (BC) and offline MinSubFI (MinSubFI\textsubscript{OFF}) policy initialization on the demonstrator satisfaction (left) and demonstrator-normalized true returns (right) of our online MinSubFI algorithm (averaged over the same 5 seeds, demonstrator performance indicated by black dashed line). The comparable true returns and higher rates of demonstrator satisfaction suggest a MinSubFI\textsubscript{OFF} policy initialization rather than BC.}
    \label{fig:policy_init_ablation}
\end{figure}

\section{Ablation Study: Policy Initialization}
\label{sec:policy_init_ablation}
To better understand the impact of policy initialization, we compare the online version of our algorithm (MinSubFI\textsubscript{ON}) when trained from one of two pretrained policies: a BC policy and a policy trained using the offline version of our algorithm (MinSubFI\textsubscript{OFF}). The results in this study are averaged over the \textit{same} 5 randomly-chosen seeds; using the same 5 seeds for both initializations ensures that the same demonstrations are chosen for training and evaluation (for any given seed). Figure \ref{fig:policy_init_ablation} shows the relative $\gamma$-satisficing values (Figure \ref{fig:policy_init_ablation}, left) and demonstrator-normalized true returns (Figure \ref{fig:policy_init_ablation}, right) resulting from both initializations. While the true returns between both initializations do not differ significantly, the offline initialization consistently guarantees a higher rate of demonstrator satisfaction. Consequently, we use a MinSubFI\textsubscript{OFF} policy initialization throughout our experiments.

\end{document}